\newcommand{\1}{{\bm 1}}
\newcommand{\x}{{\bf x}}
\newcommand{\w}{{\bf w}}
\newcommand{\z}{{\bf z}}
\newcommand{\R}{\mathbb{R}}
\newcommand{\E}{\mathbb{E}}
\newcommand{\Var}{\text{Var}}
\newcommand{\sign}{{\text {sign}}}
\newcommand{\WL}{{\text {WL}}}
\newcommand{\specialcell}[2][c]{\begin{tabular}[#1]{@{}c@{}}#2\end{tabular}}
\renewcommand{\H}{\mathcal{H}}
\newcommand{\X}{\mathcal{X}}
\newcommand{\tS}{\tilde{S}}
\newcommand{\half}{\tfrac{1}{2}}
\newcommand{\halfgamma}{\tfrac{\gamma}{2}}
\newcommand{\deltainv}{\tfrac{1}{\delta}}
\newcommand{\hide}[1]{}
\newtheorem{theorem}{Theorem}
\newtheorem{lemma}{Lemma}
\title{Optimal and Adaptive Algorithms for Online Boosting}
\author{Alina Beygelzimer \\ Yahoo Labs \\ New York, NY 10036 \\ beygel@yahoo-inc.com
\and
Satyen Kale \\ Yahoo Labs \\ New York, NY 10036 \\ satyen@yahoo-inc.com
\and
Haipeng Luo \\ Princeton University \\ Princeton, NJ 08540 \\ haipengl@cs.princeton.edu}
\begin{document} 
\maketitle

\begin{abstract} 
We study online boosting, the task of converting any weak online learner into a strong online learner. Based on a novel and natural definition of weak online learnability, we develop two online boosting algorithms. The first algorithm is an online version of boost-by-majority. By proving a matching lower bound, we show that this algorithm is essentially optimal in terms of the number of weak learners and the sample complexity needed to achieve a specified accuracy. This optimal algorithm is not adaptive, however. Using tools from online loss minimization, we derive an adaptive online boosting algorithm that is also parameter-free, but not optimal. Both algorithms work with base learners that can handle example importance weights directly, as well as by rejection sampling examples with probability defined by the booster. Results are complemented with an experimental study.
\end{abstract} 

\section{Introduction}
We study online boosting, the task of boosting the accuracy of 
any weak online learning algorithm.
The theory of boosting in the batch setting has been studied extensively
in the literature and has led to a huge practical success.
See the book by \citet{SchapireFr12} for a thorough discussion.

Online learning algorithms receive examples one by one, 
updating the predictor immediately after seeing each new example. 
In contrast to the batch setting, online learning algorithms typically 
don't make any stochastic assumptions about the data they observe.
They are often much faster, more memory-efficient, and apply to situations
where the best predictor changes over time as new examples keep coming in.

Given the success of boosting in batch learning, it is natural to ask about the possibility of applying boosting to online learning. 
Indeed, there has already been some work on online 
boosting~\citep{OzaRu01, GrabnerBi06, LiuYu07, GrabnerLeBi08, ChenLiLu12, 
ChenLiLu14}.

From a theoretical viewpoint, recent work by \citet{ChenLiLu12} is perhaps most interesting. They generalized the batch weak learning assumption to the online setting, and made a connection between online boosting and batch boosting that produces smooth distributions over the training examples.  The resulting algorithm is guaranteed to achieve an arbitrarily small error rate as long as the number of weak learners and the number of examples are sufficiently large.  No assumptions need to be made about how the data is generated. Indeed, the data can even be generated by an adversary.

We present a new online boosting algorithm, based on the boost-by-majority (BBM) algorithm of \citep{Freund95}. This algorithm, called Online BBM, improves upon the work of \citet{ChenLiLu12} in several different aspects: 
\vspace*{-0.2cm}
\begin{enumerate}
\itemsep1pt \parskip0pt \parsep0pt
	\item our assumption on online weak learners is weaker and can be seen as a direct online analogue of the weak learning assumption in standard batch boosting,
	\item our algorithm doesn't require weighted online learning, instead using a sampling technique similar to the one used in boosting by filtering in the batch setting \citep[see for example,][]{Freund92, BradleySc08}, and
	\item our algorithm is optimal in the sense that no online boosting algorithm can achieve the same error rate with less weak learners or less examples asymptotically (see the lower bounds in Section~\ref{subsec:lower_bounds}). 
\end{enumerate}
A quantitative comparison of our results with those of \citet{ChenLiLu12} appears in Table \ref{tab:results}, where $N$ and $T$ represent the number\footnote{In this paper, we use the $\tilde{O}(\cdot)$ and $\tilde{\Omega}(\cdot)$ notation to suppress dependence on polylogarithmic factors in the natural parameters.} of weak learners and examples needed to achieve error rate $\epsilon$, and $\gamma$ stands for a similar concept of the ``edge'' of the weak learning oracle as in the batch setting (smaller $\gamma$ means more inaccurate weak learners). 

A clear drawback of all the algorithms mentioned above is lack of adaptivity. A simple interpretation of this drawback is that all these algorithms require using $\gamma$, an unknown quantity, as a parameter.
More importantly, this also means that the algorithm treats each weak learner equally and ignores the fact that some weak learners are actually doing better than the others. The best example of adaptive boosting algorithm is the well-known parameter-free AdaBoost algorithm \citep{FreundSc97}, where each weak learner is naturally weighted by how accurate it is. In fact, adaptivity is known to be one of the key features that lead to the practical success of AdaBoost, and therefore should also be essential to the performance of online boosting algorithms. In Section \ref{sec:AdaBoost.OL}, we thus propose AdaBoost.OL, an adaptive and parameter-free online boosting algorithm. As shown in Table \ref{tab:results},  AdaBoost.OL is theoretically suboptimal in terms of $N$ and $T$. However, empirically it generally outperforms OSBoost and sometimes even beats the optimal algorithm Online BBM (see Section \ref{sec:experiments}).

\begin{table}[t]
\caption{Comparisons of our results with those of \citet{ChenLiLu12}, assuming, as in their paper, that the weak learner is derived from an online learning algorithm with an $O(\sqrt{T})$ regret bound.}
\label{tab:results}
\begin{center}
\begin{tabular}{|c|c|c|c|c|}
\hline
Algorithm & N & T & Optimal? & Adaptive? \\

\hline
\specialcell{Online BBM \\ {\scriptsize(Section \ref{subsec:BBM})}} & 
$O(\frac{1}{\gamma^2}\ln\frac{1}{\epsilon})$ &
$\tilde{O}(\frac{1}{\epsilon\gamma^2})$ & $\surd$ & $\times$
\\

\hline
\specialcell{AdaBoost.OL \\ {\scriptsize(Section \ref{sec:AdaBoost.OL})}}& 
$O(\frac{1}{\epsilon\gamma^2})$ & 
$\tilde{O}(\frac{1}{\epsilon^2\gamma^4})$ & $\times$ & $\surd$
\\

\hline
\specialcell{OSBoost \\ {\scriptsize\citep{ChenLiLu12}}}& 
$O(\frac{1}{\epsilon\gamma^2})$ & 
$\tilde{O}(\frac{1}{\epsilon\gamma^2})$ & $\times$ & $\times$
\\
\hline
\end{tabular}
\end{center}
\end{table}

Our techniques are also very different from those of \citet{ChenLiLu12}, which rely on the smooth boosting algorithm of \citet{Servedio03}. As far as we know, all other work on smooth boosting \citep{BshoutyGa03, BradleySc08, BarakHaKa09} cannot be easily generalized to the online setting, necessitating completely different methods not relying on smooth distributions. 	Our Online BBM algorithm builds on top of a potential based family that arises naturally in the batch setting as approximate minimax optimal algorithms for so-called drifting games \citep{Schapire01, LuoSc14b}. The decomposition of each example in that framework naturally allows us to generalize it to the online setting where example comes one by one. On the other hand, AdaBoost.OL is derived by viewing boosting from a different angle: loss minimization \citep{MasonBaBaFr99b, SchapireFr12}.
The theory of online loss minimization is the key tool for developing AdaBoost.OL.

Finally, in Section \ref{sec:experiments}, experiments on benchmark data are conducted to show that our new algorithms indeed improve over previous work.

\section{Setup and Assumptions}\label{sec:setup}
We describe the formal setup of the task of online classification by boosting.
At each time step $t=1,\ldots, T$, an adversary chooses an example $(\x_t, y_t) \in \X \times \{-1,1\}$, where $\X$ is the domain, and reveals $\x_t$ to the online learner. The learner makes a prediction on its label $\hat{y}_t \in \{-1,1\}$, and suffers the 0-1 loss $\1\{\hat{y}_t \neq y_t\}$. As is usual with online algorithms, this prediction may be randomized.

For parameters $\gamma \in (0, \half)$, $\delta \in (0, 1)$, and a constant $S > 0$, the learner is said to be a {\em weak} online learner with edge $\gamma$ and {\em excess loss} $S$ if, for any $T$ and for any input sequence of examples $(\x_t, y_t)$ for $t = 1, 2, \ldots, T$ chosen adaptively, it generates predictions $\hat{y}_t$ such that with probability at least $1 - \delta$,
\begin{equation} \label{eq:wl-guarantee}
	\sum_{t=1}^T \1\{\hat{y}_t \neq y_t\} \leq (\half - \gamma)T + S. 
\end{equation}
The excess loss requirement is necessary since an online learner can't be expected to predict with any accuracy with too few examples. Essentially, the excess loss $S$ yields a kind of sample complexity bound: the weak learner starts obtaining a distinct edge of $\Omega(\gamma)$ over random guessing when $T \gg \frac{S}{\gamma}$. Typically, the dependence of the high probability bound on $\delta$ is polylogarithmic in $\deltainv$; thus in the following we will avoid explicitly mentioning $\delta$.

For a given parameter $\epsilon > 0$, the learner is said to be a {\em strong} online learner with error rate $\epsilon$ if it satisfies the same conditions as a weak online learner except that its edge is $\half - \epsilon$, or in other words, the fraction of mistakes made, asymptotically, is $\epsilon$. Just as for the weak learner, the excess loss $S$ yields a sample complexity bound: the fraction of mistakes made by the strong learner becomes $O(\epsilon)$ when $T~\gg~\frac{S}{\epsilon}$. 

Our main theorem is the following:
\begin{theorem} \label{thm:main}
	Given a weak online learning algorithm with edge $\gamma$ and excess loss $S$ and any target error rate $\epsilon > 0$, there is a strong online learning algorithm with error rate $\epsilon$ which uses $O(\frac{1}{\gamma^2}\ln(\frac{1}{\epsilon}))$ copies of the weak online learner, and has excess loss $\tilde{O}(\frac{S}{\gamma} + \frac{1}{\gamma^2})$; thus its sample complexity is $\tilde{O}(\frac{1}{\epsilon}(\frac{S}{\gamma} + \frac{1}{\gamma^2}))$.  Furthermore, if $S \geq \tilde{\Omega}(\tfrac{1}{\gamma})$, then the number of weak online learners is optimal up to constant factors, and the sample complexity is optimal up to polylogarithmic factors.
\end{theorem}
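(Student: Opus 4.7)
The plan is to establish the upper-bound half of Theorem~\ref{thm:main} by constructing the Online BBM algorithm, an online adaptation of Freund's boost-by-majority via its natural potential-function formulation, and to invoke the lower-bound section for the matching optimality claim. The high-level idea is to instantiate $N = O(\tfrac{1}{\gamma^2}\ln\tfrac{1}{\epsilon})$ independent copies of the weak online learner $\WL_1,\ldots,\WL_N$, and for each incoming example $(\x_t,y_t)$ feed the copies sequentially: each $\WL_i$ outputs a $\pm 1$ prediction $\hat{y}_t^{(i)}$ that updates a running ``margin'' $s_t^{(i)} = \sum_{j<i} y_t\hat{y}_t^{(j)}$. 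The booster forwards $(\x_t,y_t)$ to $\WL_i$ for training with probability $p_t^{(i)} = \phi_{i}(s_t^{(i)})$, where $\phi_i$ is the standard BBM potential, defined by the backward recursion $\phi_{i}(s) = (\half-\gamma)\phi_{i+1}(s-1) + (\half+\gamma)\phi_{i+1}(s+1)$ with terminal condition $\phi_{N}(s) = \1\{s\le 0\}$. The booster's prediction on $\x_t$ is $\sign(\sum_{i=1}^N \hat{y}_t^{(i)})$.

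The core of the analysis is a telescoping argument on the potential. One shows that for any fixed $t$, using the one-step recursion of $\phi_i$,
\begin{equation*}
\1\{\hat{y}_t\ne y_t\} \;\le\; \phi_0(0) \;+\; \sum_{i=1}^N w_t^{(i)}\,\big(\1\{\hat{y}_t^{(i)}\ne y_t\} - (\half-\gamma)\big),
\end{equation*}
where $w_t^{(i)} = \phi_{i-1}(s_t^{(i-1)}+1) - \phi_{i-1}(s_t^{(i-1)}-1) \in [0,1]$ is the ``weight'' assigned to $(\x_t,y_t)$ by the booster, and $p_t^{(i)}$ is chosen precisely so that $w_t^{(i)}$ is the conditional probability that $\WL_i$ sees this example. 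Summing over $t$, the first term contributes $T\cdot\phi_0(0) \le T\exp(-2\gamma^2 N)$ by the standard Chernoff tail estimate for BBM, which is at most $\halfgamma T$ once $N = \Theta(\tfrac{1}{\gamma^2}\ln\tfrac{1}{\epsilon})$. For the second term, conditioning on which examples $\WL_i$ actually receives and applying its weak-learning guarantee~\eqref{eq:wl-guarantee}, the sample $\sum_t w_t^{(i)}(\1\{\hat{y}_t^{(i)}\ne y_t\} - (\half-\gamma))$ is in expectation bounded by $S$ plus a $\gamma$-gap saved by the edge; a martingale (Azuma--Hoeffding) concentration bound on the randomized forwarding process converts this back to a statement about the realized weighted sum, with an extra $\tilde{O}(\sqrt{T})$ deviation term per learner.

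Assembling these pieces, the total number of booster mistakes is at most $\epsilon T + O(NS) + \tilde{O}(N\sqrt{T}) + \halfgamma T$; rescaling via the $(\half-\gamma)$ baseline and plugging in $N = O(\tfrac{1}{\gamma^2}\ln\tfrac{1}{\epsilon})$ yields excess loss $\tilde{O}(\tfrac{S}{\gamma} + \tfrac{1}{\gamma^2})$, and hence sample complexity $\tilde{O}(\tfrac{1}{\epsilon}(\tfrac{S}{\gamma}+\tfrac{1}{\gamma^2}))$ to drive the error rate below $\epsilon$. The main obstacle I anticipate is the excess-loss accounting: naively summing $N$ copies of $S$ gives a bound that is too large by a factor of $\ln(1/\epsilon)/\gamma$, so one must exploit the fact that the effective sample sizes $\sum_t w_t^{(i)}$ are controlled by the same potential drift that is already being used for the mistake bound, and must use the martingale concentration to reduce the $\sqrt{T}$ deviations across $N$ learners to the stated polylog form.

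Finally, the ``furthermore'' part (optimality of $N$ and sample complexity) does not need to be reproven here: it is exactly the matching lower bound established in Section~\ref{subsec:lower_bounds} under the assumption $S \ge \tilde\Omega(1/\gamma)$, which together with the constructive upper bound above completes the proof of Theorem~\ref{thm:main}.
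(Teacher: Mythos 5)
Your high-level architecture matches the paper's: a potential-based online BBM with the backward recursion, a telescoping argument over the $N$ learners, martingale concentration to relate sampled to weighted losses, and a pointer to Section~\ref{subsec:lower_bounds} for the optimality claim. But the step you flag as ``the main obstacle'' is in fact the heart of the theorem, and your proposal neither closes it nor points at the right mechanism. Your accounting gives mistakes bounded by $\epsilon T + O(NS) + \tilde{O}(N\sqrt{T})$. With $N = \Theta(\tfrac{1}{\gamma^2}\ln\tfrac{1}{\epsilon})$, the $O(NS)$ term is off by a factor $\sqrt{N} = \tilde{O}(1/\gamma)$ from the claimed $\tilde{O}(S/\gamma)$, and the $\tilde{O}(N\sqrt{T})$ term is fatal: forcing it below $\epsilon T$ would require $T = \tilde{\Omega}(\tfrac{1}{\epsilon^2\gamma^4})$, not the claimed $\tilde{O}(\tfrac{1}{\epsilon}(\tfrac{S}{\gamma}+\tfrac{1}{\gamma^2}))$. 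Your proposed fix --- that the effective sample sizes $\sum_t w_t^{(i)}$ are ``controlled by the same potential drift'' --- is not how the paper resolves either issue, and I do not see how it would.

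The paper's resolution has two ingredients you are missing. First, the sampling probability is not the potential value $\phi_i(s)$ (as you write at one point, inconsistently with your later description) but the \emph{normalized potential difference} $p_t^i = w_t^i/\|\w^i\|_\infty$ with $w_t^i = \tfrac12(\Phi_i(s_t^{i-1}-1)-\Phi_i(s_t^{i-1}+1))$. Because the weak-learning guarantee~(\ref{eq:wl-guarantee}) applies to the \emph{count} of examples actually passed, rescaling by $\|\w^i\|_\infty$ turns the per-learner excess into $(2S + \tilde{O}(1/\gamma))\,\|\w^i\|_\infty$ (Lemma~\ref{lem:WLA}); the $\tilde{O}(1/\gamma)$ arises because the $\tilde{O}(\sqrt{\sigma})$ martingale deviation, $\sigma = \|\w^i\|_1/\|\w^i\|_\infty$, is split by AM--GM into a $\gamma\|\w^i\|_1$ piece absorbed by the edge and a $\tilde{O}(\|\w^i\|_\infty/\gamma)$ remainder --- this is what eliminates the $\sqrt{T}$ dependence, not a reduction ``to polylog form.'' Second, one needs the quantitative bound $\|\w^i\|_\infty = O(1/\sqrt{N-i})$ specific to the BBM weights (Lemma~\ref{lem:binom}, proved via Berry--Esseen; this is also why the exponential potential is unusable here), which gives $\sum_i\|\w^i\|_\infty = O(\sqrt{N})$ and hence total excess $\tilde{O}(\sqrt{N}(S+1/\gamma)) = \tilde{O}(S/\gamma + 1/\gamma^2)$. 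Without both ingredients the stated excess-loss and sample-complexity bounds do not follow.
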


The requirement that $S \geq \tilde{\Omega}(\tfrac{1}{\gamma})$ in the lower bound is not very stringent; this is precisely the excess loss one obtains when using standard online learning algorithms with regret bound $O(\sqrt{T})$, as is explained in the discussion following Lemma~\ref{lem:agnostic-to-weak}. Furthermore, since we require the bound (\ref{eq:wl-guarantee}) to hold with high probability, typical analyses of online learning algorithms will have an $\tilde{O}(\sqrt{T})$ deviation term, which also leads to $S \geq \tilde{\Omega}(\tfrac{1}{\gamma})$.

As the theorem indicates, the strong online learner (hereafter referred to as ``booster'') works by maintaining $N$ copies of the weak online learner, for some positive integer $N$ to be specified later. Denote the weak online learners $\WL^i$ for $i = 1, 2, \ldots, N$. At time step $t$, the prediction of $i$-th weak online learner is given by $\WL^i(\x_t) \in \{-1, 1\}$. Note the slight abuse of notation here: $\WL^i$ is {\em not} a function, rather it is an algorithm with an internal state that is updated as it is fed training examples. Thus, the prediction $\WL^i(\x_t)$ depends on the internal state of $\WL^i$, and for notational convenience we avoid reference to the internal state.

In each round $t$, the booster works by taking a weighted majority vote of the weak learners' predictions. Specifically, the booster maintains weights $\alpha_t^i \in \R$ for $i = 1, \ldots, N$ corresponding to each weak learner, and its final prediction will then be\footnote{In Section \ref{sec:AdaBoost.OL} a slightly different final prediction will be used.
} 
$\hat{y}_t = \sign(\sum_{i=1}^N \alpha_t^i \WL^i(\x_t))$,
where $\sign(\cdot)$ is $1$ if the argument is nonnegative and $-1$ otherwise.
After making the prediction, the true label $y_t$ is revealed by the environment. The booster then updates $\WL^i$ by passing the training example $(\x_t, y_t)$ to $\WL^i$ with a carefully chosen sampling probability $p_t^i$ (and not passing the example with the remaining probability). The sampling probability $p_t^i$ is obtained by computing a weight $w_t^i$ and setting\footnote{In the algorithm we simply use a tight-enough upper bound on $\|\w^i\|_\infty$ (such as the bound from Lemma~\ref{lem:binom}) to compute the values $p_t^i$; we abuse notation here and use $\|\w^i\|_\infty$ to also denote this upper bound.} $p_t^i = \frac{w_t^i}{\|\w^i\|_\infty}$, where $\w^i = \langle w_{1}^i, w_{2}^i, \ldots, w_T^i\rangle$. At the same time the booster updates $\alpha_t^i$ as well, and then it is ready to make a prediction for the next round.

We introduce some more notation to ease the presentation. Let $z_t^i = y_t \WL^i(\x_t)$ and $s_t^i = s_t^{i-1} + \alpha_t^i z_t^i$ with $s_t^0 = 0$. Define $\z^i = \langle z_{1}^i, z_{2}^i, \ldots, z_T^i \rangle$. Finally, a martingale concentration bound using (\ref{eq:wl-guarantee}) yields the following bound (proof deferred to Appendix~\ref{app:lem:WLA}). The bound can be seen as a weighted version of (\ref{eq:wl-guarantee}) which is necessary for the rest of the analysis.
\begin{lemma}\label{lem:WLA} There is a constant $\tS = 2S + \tilde{O}(\tfrac{1}{\gamma})$ such that for any $T$, with high probability, for every weak learner $\WL^i$ we have
\[\w^{i} \cdot \z^{i} \geq \gamma \|\w^{i}\|_1 - \tS\|\w^i\|_\infty.\]
\end{lemma}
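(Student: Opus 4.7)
The plan is to convert the weak-learner guarantee, which controls the unweighted mistake count on the subsequence of examples actually passed to $\WL^i$, into the weighted statement via two applications of a martingale concentration inequality (Freedman). Let $m_t^i = \1\{\WL^i(\x_t) \neq y_t\}$ so that $z_t^i = 1 - 2 m_t^i$, and let $I_t^i \in \{0,1\}$ be the (randomized) indicator that the booster passes $(\x_t, y_t)$ to $\WL^i$, so that $\E[I_t^i \mid \mathcal{F}_{t-1}] = p_t^i = w_t^i / \|\w^i\|_\infty$, where $\mathcal{F}_{t-1}$ is the history up to round $t-1$ (note $p_t^i$ is $\mathcal{F}_{t-1}$-measurable). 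Rewriting the left-hand side gives
\[\w^i \cdot \z^i \;=\; \|\w^i\|_1 - 2\|\w^i\|_\infty \cdot M^*, \qquad M^* := \sum_{t=1}^T p_t^i m_t^i,\]
and, setting $N^* := \sum_t p_t^i = \|\w^i\|_1 / \|\w^i\|_\infty$, the lemma reduces to showing $M^* \leq (\half - \halfgamma) N^* + \tS/2$.

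I would first apply the weak-learner guarantee (\ref{eq:wl-guarantee}) to the subsequence of rounds with $I_t^i = 1$, whose length and mistake count are $\hat{N}_i := \sum_t I_t^i$ and $\hat{M}_i := \sum_t I_t^i m_t^i$; this yields $\hat{M}_i \leq (\half - \gamma)\hat{N}_i + S$ with probability at least $1-\delta$. Next, apply Freedman's inequality to the two martingale difference sequences $\{I_t^i m_t^i - p_t^i m_t^i\}_t$ and $\{I_t^i - p_t^i\}_t$. Each increment is bounded by $1$ in magnitude and has conditional variance at most its conditional mean (since $m_t^i, p_t^i \in [0,1]$), so with high probability
\[|M^* - \hat{M}_i| \;=\; O\bigl(\sqrt{M^*\ln(1/\delta)} + \ln(1/\delta)\bigr), \qquad |N^* - \hat{N}_i| \;=\; O\bigl(\sqrt{N^*\ln(1/\delta)} + \ln(1/\delta)\bigr).\]
A union bound over the $N$ copies of the weak learner absorbs a $\ln N$ factor into the $\tilde{O}(\cdot)$.

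Chaining the three inequalities yields $M^* \leq (\half - \gamma) N^* + S + O(\sqrt{N^*\,\ell} + \sqrt{M^*\,\ell} + \ell)$, where $\ell := \ln(NT/\delta)$. Since $M^* \leq N^*$, both square-root deviations are at most $\sqrt{N^*\,\ell}$, and the weighted AM-GM inequality $\sqrt{N^*\,\ell} \leq \tfrac{\gamma}{8} N^* + \tfrac{2\ell}{\gamma}$ lets them jointly be absorbed into $\halfgamma \cdot N^*$ at the price of an additive $\tilde{O}(1/\gamma)$ term, yielding $M^* \leq (\half - \halfgamma) N^* + S + \tilde{O}(1/\gamma)$, which is the desired inequality with $\tS = 2S + \tilde{O}(1/\gamma)$. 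The main obstacle is choosing the AM-GM splitting constant correctly so that the deviation terms consume exactly half of the edge $\gamma$ and no more --- this is precisely the source of the factor of $2$ on $S$ and the additive $\tilde{O}(1/\gamma)$ in the definition of $\tS$.
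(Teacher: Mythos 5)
Your proposal is correct and follows essentially the same route as the paper's proof in Appendix~\ref{app:lem:WLA}: invoke the weak-learning guarantee on the sampled subsequence, use two Freedman-type martingale bounds to pass from the sampled counts to their conditional expectations (the weighted counts), and absorb the $\sqrt{\cdot}$ deviations via AM--GM at the cost of half the edge plus an additive $\tilde{O}(1/\gamma)$. The only cosmetic difference is that you normalize by $\|\w^i\|_\infty$ up front and bound the first martingale's variance by $M^*$ rather than by $\sigma = N^*$, which changes nothing since $M^* \leq N^*$.
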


\subsection{Handling Importance Weights}

Typical online learning algorithms can handle {\em importance weighted} examples: each example $(\x_t, y_t)$ comes with a weight $p_t \in [0, 1]$, and the loss on that example is scaled by $p_t$, i.e. the loss for predicting $\hat{y}_t$ is $p_t\1\{\hat{y}_t \neq y_t\}$. Consider the following natural extension to the definition of online weak learners which incorporates importance weighted examples: we now require that for any sequence of weighted examples $(\x_t, y_t)$ with weight $p_t \in [0, 1]$ for $t = 1, 2, \ldots, T$, the online learner generates predictions $\hat{y}_t$ such that with probability at least $1 - \delta$,
\begin{equation} \label{eq:wl-guarantee-weights}
	\sum_{t=1}^T p_t\1\{\hat{y}_t \neq y_t\} \leq (\half - \gamma)\sum_{t=1}^T p_t + S.
\end{equation}
Having access to such a weak learner makes the boosting algorithm simpler: we now simply pass every example $(\x_t, y_t)$ to every weak learner $\WL^i$ using the probability $p_t^i = \frac{w_t^i}{\|\w^i\|_\infty}$ as importance weights. The advantage is that the bound (\ref{eq:wl-guarantee-weights}) immediately implies the following inequality for any weak learner $\WL^i$, which can be seen as a (stronger) analogue of Lemma~\ref{lem:WLA}.
\begin{equation} \label{eq:WLA-weights}
\w^{i} \cdot \z^{i} \geq 2\gamma \|\w^{i}\|_1 - 2S\|\w^i\|_\infty.	
\end{equation}
Since the analysis depends only on the bound in Lemma~\ref{lem:WLA}, if we use the importance-weighted version of the boosting algorithm, then we can simply use inequality (\ref{eq:WLA-weights}) instead in the analysis, which gives a slightly tighter version of Theorem~\ref{thm:main}, viz. the excess loss can now be bounded by $O(\frac{S}{\gamma})$. 

In the rest of the paper, for simplicity of exposition we assume that the $p_t^i$'s are used as sampling probabilities rather than importance weights, and give the analysis using the bound from Lemma~\ref{lem:WLA}. In experiments, however, using the $p_t^i$'s as importance weights rather than sampling probabilities led to better performance.

\subsection{Discussion of Weak Online Learning Assumption}

We now justify our definition of weak online learning, viz. inequality (\ref{eq:wl-guarantee}). In the standard batch boosting case, the corresponding weak learning assumption (see for example \citet{SchapireFr12}) made is that there is an algorithm which,  given a training set of examples and an arbitrary distribution on it, generates a hypothesis that has error at most $\half - \gamma$ on the training data under the given distribution. This statement can be interpreted as making the following two implicit assumptions:
\begin{enumerate}
	\item (Richness.) Given an edge parameter $\gamma \in (0, \half)$, there is a set of hypotheses, $\H$, such that given any training set (possibly, a multiset) of examples $U$, there is some hypothesis $h \in \H$ with error at most $\half - \gamma$, i.e. 
	\[\sum_{(\x, y) \in U} \!\!\1\{h(\x) \neq y\} \leq (\half - \gamma)|U|.\]
	
	\item (Agnostic Learnability.) For any $\epsilon \in (0, 1)$, there is an algorithm which, given any training set (possibly, a multiset) of examples $U$, can compute a nearly optimal hypothesis $h \in \H$, i.e. 
	\[\sum_{(\x, y) \in U} \!\!\1\{h(\x) \neq y\} \leq \inf_{h' \in \H} \!\! \sum_{(\x, y) \in U} \!\!\!\!\! \1\{h'(\x) \neq y\} + \epsilon |U|.\]
\end{enumerate}
Our weak online learning assumption can be seen as arising from a direct generalization of the above two assumptions to the online setting. Namely, the richness assumption stays the same, whereas the agnostic learnability of $\H$ assumption is replaced by an agnostic online learnability of $\H$ assumption (c.f. \citet{DPS}). I.e., there is an online learning algorithm which, given any sequence of examples, $(\x_t, y_t)$ for $t = 1, 2, \ldots, T$, generates predictions $\hat{y}_t$ such that
\[ \sum_{t=1}^T \1\{\hat{y}_t \neq y_t\} \leq \inf_{h \in \H} \sum_{t=1}^T \1\{h(\x_t) \neq y_t\} + R(T),\]
where $R: \mathbb{N} \rightarrow \R_+$ is the regret, a non-decreasing, sublinear function of the number of prediction periods $T$. Since online learning algorithms are typically randomized, we assume the above bound holds with high probability. The following lemma shows that richness and agnostic online learnability immediately imply our online weak learning assumption (\ref{eq:wl-guarantee}):
\begin{lemma} \label{lem:agnostic-to-weak}
	Suppose the sequence of examples $(\x_t, y_t)$ is obtained from a data set for which there exists a hypothesis class $\H$ that is both rich for edge parameter $2\gamma$ and agnostically online learnable with regret $R(\cdot)$. Then, the agnostic online learning algorithm for $\H$ satisfies the weak learning assumption (\ref{eq:wl-guarantee}), with edge $\gamma$ and excess loss $S = \max_T (R(T) - \gamma T)$.
\end{lemma}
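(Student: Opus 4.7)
The plan is to chain together the two hypotheses (richness and agnostic online learnability) and then absorb the difference in edges into the excess loss $S$ via the stated maximum. The argument is essentially a one-line deduction, so the work lies in keeping the quantifiers and the high-probability caveat straight.

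First, I would invoke richness with the edge parameter $2\gamma$: viewing the observed sequence $(\x_t, y_t)_{t=1}^T$ as a multiset $U$, richness guarantees some $h^\star \in \H$ with
\[\sum_{t=1}^T \1\{h^\star(\x_t) \neq y_t\} \;\le\; (\half - 2\gamma)T.\]
Second, I would apply the agnostic online learnability bound to the algorithm's predictions $\hat y_t$ (holding with high probability):
\[\sum_{t=1}^T \1\{\hat y_t \neq y_t\} \;\le\; \inf_{h \in \H}\sum_{t=1}^T \1\{h(\x_t) \neq y_t\} + R(T) \;\le\; (\half - 2\gamma)T + R(T).\]

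The last step is purely algebraic: rewrite the right-hand side as $(\half - \gamma)T + (R(T) - \gamma T)$, and observe that by the definition $S = \max_T(R(T) - \gamma T)$ the excess term is at most $S$. This yields
\[\sum_{t=1}^T \1\{\hat y_t \neq y_t\} \;\le\; (\half - \gamma)T + S,\]
which is exactly the weak online learning guarantee~(\ref{eq:wl-guarantee}) with edge $\gamma$ and excess loss $S$, holding with the same high probability as the agnostic regret bound.

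I do not anticipate a serious obstacle. The only mild subtlety is making sure $S$ is well-defined and finite: this is where sublinearity of $R$ matters, since $R(T) - \gamma T \to -\infty$ as $T \to \infty$, so the supremum over $T \in \mathbb{N}$ is attained and finite. It is also worth noting in passing the quantitative consequence mentioned in the surrounding text: if $R(T) = O(\sqrt{T})$, then maximizing $R(T) - \gamma T$ over $T$ gives $S = \tilde{O}(1/\gamma)$, which is why the lower-bound hypothesis $S \ge \tilde\Omega(1/\gamma)$ in Theorem~\ref{thm:main} is natural.
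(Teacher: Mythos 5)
Your proof is correct and follows exactly the same route as the paper's: chain richness (with edge $2\gamma$) and the agnostic regret bound, then absorb $R(T)-\gamma T$ into $S$ by its definition. The additional remark on the finiteness of $S$ via sublinearity of $R$ is a sensible, if unnecessary, clarification.
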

\begin{proof}
	For the given sequence of examples $(\x_t, y_t)$ for $t = 1, 2, \ldots, T$, the richness with edge parameter $2\gamma$ and agnostic online learnability assumptions on $\H$ imply that with high probability, the predictions $\hat{y}_t$ generated by the agnostic online learning algorithm for $\H$ satisfy
	\[ \sum_{t=1}^T \1\{\hat{y}_t \neq y_t\} \leq (\half - 2\gamma) T + R(T).\]
	It only remains to show that 
	\[(\half - 2\gamma) T + R(T) \leq (\half - \gamma)T + S,\]
	or equivalently, $R(T) \leq \gamma T + S$, which is true by the definition of $S$. This concludes the proof.
\end{proof}

Various agnostic online learning algorithms are known that have a regret bound of $O(\sqrt{T \ln(\deltainv)})$; for example, a standard experts algorithm on a finite hypothesis space such as Hedge. If we use such an online learning algorithm as a weak online learner, then a simple calculation implies, via Lemma~\ref{lem:agnostic-to-weak}, that it has excess loss $\Theta(\frac{\ln(\deltainv)}{\gamma})$. Thus, by Theorem~\ref{thm:main}, we obtain an online boosting algorithm with near-optimal sample complexity.

\section{An Optimal Algorithm}\label{sec:2LevBBM}
In this section, we generalize a family of potential based batch boosting algorithms to the online setting. With a specific potential, an online version of boost-by-majority is developed with optimal number of weak learners
and near-optimal sample complexity. Matching lower bounds will be shown at the end of the section.

\subsection{A Potential Based Family and Boost-By-Majority}\label{subsec:BBM}
In the batch setting, many boosting algorithms can be understood in a unified framework called drifting games \citep{Schapire01}. Here, we generalize the analysis and propose a potential based family of online boosting algorithms.

Pick a sequence of $N+1$ non-increasing potential functions $\Phi_i(s)$ such that  
\begin{equation}\label{equ:potentials}
\begin{split}
\Phi_N(s) &\geq \1\{s \leq 0\} ,\\
\Phi_{i-1}(s) &\geq (\half-\halfgamma)\Phi_i(s-1) + (\half+\halfgamma)\Phi_i(s+1).
\end{split}
\end{equation}
Then the algorithm is simply to set $\alpha_t^i = 1$ and $w_t^i = \frac{1}{2}(\Phi_i(s_{t}^{i-1}-1) - \Phi_i(s_{t}^{i-1}+1))$.
The following theorem states the error rate bound of this general scheme.
\begin{lemma}\label{lem:potential-family}
For any $T$ and $N$, with high probability, the number of mistakes made by the algorithm described above is bounded as follows:
$$ \sum_{t=1}^T \1\{\hat y_t \neq y_t\} \leq  
\Phi_0(0)T + \tS\sum_i \|\w^i\|_\infty.$$
\end{lemma}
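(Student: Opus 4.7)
The plan is to telescope through the sequence of potentials by connecting, for each example $t$, the value $\Phi_i(s_t^i)$ to $\Phi_{i-1}(s_t^{i-1})$, and then sum the resulting inequalities over all rounds $t$ and all weak learners $i$. First, I would dispose of the left-hand side: since $\hat{y}_t = \sign(\sum_i z_t^i)$ under $\alpha_t^i = 1$ (after multiplying the $\sign$ by $y_t$), a mistake implies $s_t^N \leq 0$, and so $\1\{\hat{y}_t \neq y_t\} \leq \1\{s_t^N \leq 0\} \leq \Phi_N(s_t^N)$ by the terminal condition in (\ref{equ:potentials}). Thus it suffices to upper bound $\sum_t \Phi_N(s_t^N)$.

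The main identity I would exploit uses $z_t^i \in \{-1,+1\}$ and $s_t^i = s_t^{i-1} + z_t^i$: by case analysis on the sign of $z_t^i$,
\[
\Phi_i(s_t^i) \;=\; \tfrac{1}{2}\bigl(\Phi_i(s_t^{i-1}-1) + \Phi_i(s_t^{i-1}+1)\bigr) - z_t^i\, w_t^i,
\]
using the definition $w_t^i = \tfrac{1}{2}(\Phi_i(s_t^{i-1}-1)-\Phi_i(s_t^{i-1}+1))$. On the other hand, the recurrence in (\ref{equ:potentials}) rewritten in terms of the same mean and difference gives
\[
\Phi_{i-1}(s_t^{i-1}) \;\geq\; \tfrac{1}{2}\bigl(\Phi_i(s_t^{i-1}-1) + \Phi_i(s_t^{i-1}+1)\bigr) - \gamma\, w_t^i.
\]
Subtracting the two yields the pointwise bound $\Phi_i(s_t^i) - \Phi_{i-1}(s_t^{i-1}) \leq w_t^i(\gamma - z_t^i)$.

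Now I would sum over $t = 1,\ldots,T$ to get $\sum_t \Phi_i(s_t^i) - \sum_t \Phi_{i-1}(s_t^{i-1}) \leq \gamma \|\w^i\|_1 - \w^i \cdot \z^i$, and then apply Lemma~\ref{lem:WLA}, which with high probability gives $\gamma \|\w^i\|_1 - \w^i \cdot \z^i \leq \tS \|\w^i\|_\infty$. Telescoping over $i = 1,\ldots,N$ produces
\[
\sum_t \Phi_N(s_t^N) \;\leq\; \sum_t \Phi_0(s_t^0) + \tS \sum_i \|\w^i\|_\infty \;=\; \Phi_0(0) T + \tS \sum_i \|\w^i\|_\infty,
\]
since $s_t^0 = 0$, and combining with the first paragraph finishes the proof.

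The only subtle step is the algebraic identity connecting $\Phi_i(s_t^i)$ to the average of $\Phi_i$ at the two neighbors; everything else is a routine telescoping once Lemma~\ref{lem:WLA} is invoked. I would take care that the high-probability event from Lemma~\ref{lem:WLA} is taken uniformly over the $N$ learners (a union bound, absorbed into the polylogarithmic factors implicit in $\tS$), so that the single inequality in the lemma statement holds with high probability simultaneously for all $i$.
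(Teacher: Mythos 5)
Your proof is correct and follows essentially the same route as the paper's: the pointwise bound $\Phi_i(s_t^i) - \Phi_{i-1}(s_t^{i-1}) \leq w_t^i(\gamma - z_t^i)$ is exactly the paper's key identity (just derived via an explicit mean/difference decomposition), after which both arguments sum over $t$, invoke Lemma~\ref{lem:WLA}, and telescope over $i$. The only difference is cosmetic: you correctly observe the slightly safer inequality $\1\{\hat y_t \neq y_t\} \leq \1\{s_t^N \leq 0\}$ rather than the paper's claimed equality, and you make explicit the union bound over learners that the paper folds into the statement of Lemma~\ref{lem:WLA}.
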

\begin{proof}
The key property of the algorithm is that for any fixed $i$ and $t$, 
one can verify the following:
\[
\Phi_i(s_{t}^i) + w_{t}^i (z_{t}^i-\gamma)  
= \Phi_i(s_{t}^{i-1} + z_{t}^i) + w_{t}^i (z_{t}^i-\gamma)  = (\tfrac{1}{2}-\halfgamma)\Phi_i(s_{t}^{i-1}-1) + (\tfrac{1}{2}+\halfgamma)\Phi_i(s_{t}^{i-1}+1) \leq \Phi_{i-1}(s_{t}^{i-1})
\]
by plugging the formula of $w_t^i$, realizing that $z_t^i$ can only be $-1$ or $1$,
and using the definition of $\Phi_{i-1}(s)$ from Eq. \eqref{equ:potentials}. $t = 1$ to $T$, we get
\[\sum_{t=1}^T\Phi_i(s_{t}^i) + \w^i \cdot \z^i - \gamma \|\w^i\|_1\ \leq\ \sum_{t=1}^T \Phi_{i-1}(s_{t}^{i-1}).\]
Using Lemma \ref{lem:WLA}, we get 
\[\sum_{t=1}^T \Phi_i(s_{t}^i) \ \leq\ \sum_{t=1}^T \Phi_{i-1}(s_{t}^{i-1}) + \tS\|\w^i\|_\infty.\]
which relates the sums of all examples' potential 
for two successive weak learners. We can therefore apply this inequality iteratively to arrive at:
$$\sum_{t=1}^T \Phi_N(s_{t}^N) \leq \sum_{t=1}^T \Phi_0(0) + \tS\|\w^i\|_\infty.$$
The proof is completed by noting that 
\[\Phi_N(s_{t}^N) \geq \1\{s_t^N \leq 0\} = \1\{\hat y_t \neq y_t\}\] since $y_t\hat{y}_t = \sign(s_{t}^N)$ by definition.
\end{proof}

Note that the $\tS\|\w^i\|_\infty$ term becomes a penalty for the final error rate. Therefore, we naturally want this penalty term to be relatively small. This is not necessarily true for any choice of the potential function. For example, if $\Phi_i(s)$ is the exponential potential that leads to a variant of AdaBoost in the batch setting \citep[see][Chap. 13]{SchapireFr12}, then the weight $w_t^i$ could be exponentially large.

Fortunately, there is indeed a set of potential functions that produces small weights, which, in the batch setting, corresponds to an algorithm called boost-by-majority (BBM) \cite{Freund95}. All we need to do is to let Eq. \eqref{equ:potentials} hold with equality, and direct calculation shows:
$$ \Phi_i(s) = \sum_{k=0}^{\lfloor\frac{N-i-s}{2} \rfloor}
\binom{N-i}{k} \left(\frac{1}{2}+\frac{\gamma}{2}\right)^k \left(\frac{1}{2}-\frac{\gamma}{2}\right)^{N-i-k},$$
and 
\begin{equation}\label{equ:BBM_weight}
w_{t}^i =\frac{1}{2}\binom{N-i}{k_{t}^i} \left(\frac{1}{2}+\frac{\gamma}{2}\right)^{k_{t}^i} 
\left(\frac{1}{2}-\frac{\gamma}{2}\right)^{N-i-k_{t}^i} 
\end{equation}
where $k_{t}^i = \lfloor\frac{N-i-s_{t}^{i-1}+1}{2} \rfloor$
and $\binom{n}{k}$ is defined to be $0$ if $k < 0$ or $k > n$.
In other words, imagine flipping a biased coin whose probability of heads is 
$\frac{1}{2}+\halfgamma$ for $N-i$ times.
Then $\Phi_i(s)$ is exactly the probability of seeing at most $(N-i-s)/2$ heads
and $w_{t}^i$ is half of the probability of seeing $k_{t}^i$ heads.
We call this algorithm {\it Online BBM}. The pseudocode is given in Algorithm~\ref{alg:OnlineBBM}.
\begin{algorithm}[ht]
\caption{Online BBM}
\label{alg:OnlineBBM}
\begin{algorithmic}[1]
\FOR{$t=1$ {\bfseries to} $T$}
    \STATE Receive example $\x_t$.   
    \STATE Predict $\hat{y}_t = \sign(\sum_{i=1}^{N} \WL^i(\x_t))$, receive label $y_t$. 
    \STATE Set $s_t^0 = 0$.
    \FOR{$i=1$ {\bfseries to} $N$}
    	\STATE Set $s_{t}^{i} = s_{t}^{i-1} + y_t \WL^i(\x_t)$.
        \STATE Set $k_{t}^i = \lfloor\frac{N-i-s_{t}^{i-1}+1}{2} \rfloor$.
        \STATE Set $w_{t}^i = \binom{N-i}{k_{t}^i} \left(\frac{1}{2}+\frac{\gamma}{2}\right)^{k_{t}^i} 
          \left(\frac{1}{2}-\frac{\gamma}{2}\right)^{N-i-k_{t}^i}$.
        \STATE Pass training example $(\x_t, y_t)$ to $\WL^i$ with probability $p_t^i = \frac{w_t^i}{\|\w^i\|_\infty}$.
    \ENDFOR
\ENDFOR
\end{algorithmic}
\end{algorithm}

One can see that the weights produced by this algorithm are small since trivially $w_t^i \leq 1/2$. To get a better result, however, we need a better estimate of $\|\w^i\|_\infty$ stated in the following lemma.
\begin{lemma}\label{lem:binom}
If $w_t^i$ is defined as in Eq. \eqref{equ:BBM_weight}, then we have
$w_t^i = O(1/\sqrt{N-i})$ for any $i < N$.
\end{lemma}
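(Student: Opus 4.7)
The plan is to recognize $w_t^i$ as (half of) a binomial probability mass function and then bound the maximum of that PMF. Setting $n = N - i$, $p = \tfrac{1}{2} + \halfgamma$, and $q = 1 - p = \tfrac{1}{2} - \halfgamma$, expression \eqref{equ:BBM_weight} gives $w_t^i = \tfrac{1}{2}\binom{n}{k_t^i}p^{k_t^i} q^{n-k_t^i} \le \tfrac{1}{2}\max_{0 \le k \le n} \binom{n}{k}p^k q^{n-k}$. A key easy observation is that, since $\gamma \in (0, \half)$, we have $pq = \tfrac{1-\gamma^2}{4} \ge \tfrac{3}{16}$, so $pq$ is bounded below by an absolute constant.

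Next, I would use the standard fact that the binomial PMF $\binom{n}{k}p^k q^{n-k}$ is maximized at the mode $k^\star = \lfloor (n+1)p \rfloor$, which differs from $np$ by at most $1$. For this $k^\star$ (in particular, bounded away from the endpoints $0$ and $n$ since $p$ is bounded away from $0$ and $1$), I would plug in Stirling's formula $m! = \sqrt{2\pi m}\,(m/e)^m(1+O(1/m))$ to the three factorials in $\binom{n}{k^\star} = \frac{n!}{k^\star!(n-k^\star)!}$. After routine simplification using $k^\star/n = p + O(1/n)$ and $(n-k^\star)/n = q + O(1/n)$, the exponential factors $(np/k^\star)^{k^\star}(nq/(n-k^\star))^{n-k^\star}$ collapse to a constant (they tend to $1$ by a short $\log(1+x) \approx x$ expansion), while the square-root prefactor becomes $\sqrt{n/(k^\star(n-k^\star))} = \Theta(1/\sqrt{npq})$. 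The resulting bound is
\[
\max_{0 \le k \le n}\binom{n}{k}p^k q^{n-k} \;=\; O\!\left(\frac{1}{\sqrt{npq}}\right) \;=\; O\!\left(\frac{1}{\sqrt{n}}\right) \;=\; O\!\left(\frac{1}{\sqrt{N-i}}\right),
\]
where the second equality uses $pq = \Theta(1)$.

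The one minor nuisance is handling small $n$ (where Stirling's correction terms are not yet negligible); but in that regime the trivial bound $w_t^i \le \tfrac{1}{2}$ is already $O(1/\sqrt{n})$, so we can simply restrict the Stirling computation to $n \ge n_0$ for some absolute constant $n_0$ and absorb the remaining finitely many cases into the $O(\cdot)$. The main obstacle, such as it is, is just bookkeeping in the Stirling estimate; there are no conceptual hurdles, and this is essentially the local central limit theorem for the binomial evaluated at its mode.
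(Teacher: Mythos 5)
Your proof is correct, and it takes a genuinely different route from the paper. Both arguments begin identically, reducing the claim to bounding $\max_k \binom{n}{k}p^kq^{n-k}$ with $n = N-i$, $p = \tfrac12+\tfrac{\gamma}{2}$, and both exploit $pq \ge \tfrac{3}{16}$. From there the paper invokes the Berry--Esseen theorem: it compares the binomial CDF to that of a Gaussian $N(np, npq)$, converts the CDF bound into a PMF bound via $\Pr\{X=k\} = F_X(k)-F_X(k-1)$, and concludes $\Pr\{X=k\} \le \max_G f(G) + O(1/\sqrt{npq}) = O(1/\sqrt{n})$. You instead evaluate the PMF directly at its mode $k^\star = \lfloor (n+1)p\rfloor$ via Stirling's formula, which is essentially a hands-on local limit theorem (and closer in spirit to Freund's original proof of this fact, which the paper cites before presenting its Berry--Esseen alternative). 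The trade-off: the paper's route is shorter because it leans on a strong black box and needs no case split, whereas yours is elementary and self-contained but requires the bookkeeping you describe --- uniform control of the Stirling error terms (available since $p, q$ are bounded away from $0$ and $1$, so $k^\star$ and $n-k^\star$ are both $\Theta(n)$), the cancellation of the exponential factors to $\exp(O(1/n))$, and the separate trivial bound $w_t^i \le \tfrac12$ for $n$ below an absolute constant. All of these steps go through, so your argument is a valid alternative proof.
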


This lemma was essentially proven before by \citet[Lemma 2.3.10]{Freund93}.
We give an alternative and simpler proof in Appendix \ref{app:lem:binom} in the supplementary material by using Berry-Esseen theorem directly. We are now ready to state the main results of Online BBM.

\begin{theorem}\label{thm:BBM}
For any $T$ and $N$, with high probability, the number of mistakes made by the Online BBM algorithm is bounded as follows:
\begin{equation}\label{equ:BBM_error}
\exp(-\tfrac{1}{2}N\gamma^2)T + \tilde{O}(\sqrt{N}(S + \tfrac{1}{\gamma})).
\end{equation}
Thus, in order to achieve error rate $\epsilon$,
it suffices to use $N = \Theta(\frac{1}{\gamma^2}\ln\frac{1}{\epsilon})$ weak learners, which gives an excess loss bound of
$\tilde\Theta(\frac{S}{\gamma} + \frac{1}{\gamma^2})$.
\end{theorem}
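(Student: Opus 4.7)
The plan is to invoke Lemma~\ref{lem:potential-family} with the specific BBM potentials, and then separately bound the two terms in the resulting mistake bound: the leading term $\Phi_0(0)T$ and the penalty $\tS\sum_i\|\w^i\|_\infty$. Each of these requires a simple but distinct calculation, and then plugging $N = \Theta(\tfrac{1}{\gamma^2}\ln\tfrac{1}{\epsilon})$ into the combined bound yields the second half of the statement.

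First I would bound $\Phi_0(0)$. By the closed form following Eq.~\eqref{equ:potentials}, $\Phi_0(0)$ is exactly $\Pr[\mathrm{Bin}(N,\tfrac{1}{2}+\halfgamma) \leq \lfloor N/2\rfloor]$, i.e.\ the probability that a sum of $N$ i.i.d.\ Bernoulli random variables with mean $\tfrac{1}{2}+\halfgamma$ falls at least $\tfrac{\gamma N}{2}$ below its expectation. Hoeffding's inequality then yields $\Phi_0(0) \leq \exp(-\tfrac{1}{2}N\gamma^2)$, giving the first term in \eqref{equ:BBM_error}.

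Next I would bound $\sum_{i=1}^N \|\w^i\|_\infty$. Lemma~\ref{lem:binom} gives $\|\w^i\|_\infty = O(1/\sqrt{N-i})$ for $i < N$, and the last term is bounded trivially since $w_t^i \leq \tfrac{1}{2}$. Summing yields $\sum_{i=1}^N \|\w^i\|_\infty = O\bigl(\sum_{j=1}^{N-1} j^{-1/2}\bigr) + O(1) = O(\sqrt{N})$. Combining with $\tS = 2S + \tilde{O}(\tfrac{1}{\gamma})$ from Lemma~\ref{lem:WLA}, the penalty becomes $\tilde{O}(\sqrt{N}(S+\tfrac{1}{\gamma}))$, producing the second term in \eqref{equ:BBM_error}.

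Finally, to derive the parameter setting, I would substitute $N = \Theta(\tfrac{1}{\gamma^2}\ln\tfrac{1}{\epsilon})$: the first term then contributes at most $\epsilon T$ mistakes, while the second becomes $\tilde{O}(\tfrac{1}{\gamma}\sqrt{\ln(1/\epsilon)}(S+\tfrac{1}{\gamma})) = \tilde\Theta(\tfrac{S}{\gamma}+\tfrac{1}{\gamma^2})$, matching the claimed excess loss. The only nontrivial ingredient is the sharp $O(1/\sqrt{N-i})$ estimate on $\|\w^i\|_\infty$ from Lemma~\ref{lem:binom}; without it one would get only $O(N)$ for the sum, destroying the sample-complexity bound. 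Since that lemma is available, the rest of the argument is bookkeeping on top of Lemma~\ref{lem:potential-family}.
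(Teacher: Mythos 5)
Your proposal is correct and follows essentially the same route as the paper's proof: apply Lemma~\ref{lem:potential-family}, bound $\Phi_0(0)\le\exp(-N\gamma^2/2)$ by Hoeffding's inequality applied to the binomial tail, bound $\sum_i\|\w^i\|_\infty=O(\sqrt{N})$ via Lemma~\ref{lem:binom}, and substitute $N=\Theta(\tfrac{1}{\gamma^2}\ln\tfrac{1}{\epsilon})$. Your added remark that the $O(1/\sqrt{N-i})$ estimate is the essential ingredient (a crude $O(1)$ bound per learner would give only $O(N)$) is accurate and consistent with the paper's discussion.
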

\begin{proof}
A direct application of Hoeffding's inequality gives $\Phi_0(0) \leq \exp(-N\gamma^2/2)$.
With Lemma \ref{lem:binom} we have
\[\sum_{i} \|\w^i\|_\infty =  O\left(\sum_{i=1}^{N-1} \frac{1}{\sqrt{N-i}}\right) = O(\sqrt{N}).\]
Applying Lemma~\ref{lem:potential-family} proves Eq. \eqref{equ:BBM_error}.
Now if we set $N = \frac{2}{\gamma^2}\ln\frac{1}{\epsilon}$,
then 
\[\sum_{t=1}^T \1\{\hat y_t \neq y_t\} \leq \epsilon T + \tilde{O}(\sqrt{N}(S + \tfrac{1}{\gamma})) = \epsilon T + \tilde{O}(\tfrac{S}{\gamma} + \tfrac{1}{\gamma^2}).\]
\end{proof}

\subsection{Matching Lower Bounds}\label{subsec:lower_bounds}
We give lower bounds for the number of weak learners and the sample complexity in this section that show that our Online BBM algorithm is optimal up to logarithmic factors.
\begin{theorem} \label{thm:lowerbounds}
For any $\gamma \in (0, \tfrac{1}{4})$, $S \geq \frac{\ln(\deltainv)}{\gamma}$, $\delta \in (0, 1)$ and $\epsilon \in (0, 1)$, there is a weak online learning algorithm with edge $\gamma$ and excess loss $S$ satisfying (\ref{eq:wl-guarantee}) with probability at least $1-\delta$, such that to achieve error rate $\epsilon$, an online boosting algorithm needs at least
$\Omega(\frac{1}{\gamma^2}\ln\frac{1}{\epsilon})$ weak learners
and a sample complexity of $\Omega(\frac{S}{\epsilon\gamma}) = \Omega(\frac{1}{\epsilon}(\frac{S}{\gamma} + \frac{1}{\gamma^2}))$.
\end{theorem}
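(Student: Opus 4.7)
My plan is to construct a specific hard weak online learner together with a random sequence of examples that simultaneously forces both claimed lower bounds. Let the labels $y_t$ be i.i.d.\ uniform on $\{-1,+1\}$, and let $\x_t$ encode a long list of conditionally independent noisy signals $b_t^1, b_t^2, \ldots$ with $\Pr[b_t^j = y_t \mid y_t] = \half + \gamma'$ for some $\gamma' = \Theta(\gamma)$ slightly larger than $\gamma$ (say $\gamma' = 4\gamma$). The weak learning algorithm then has two phases: for $t \leq T_0 := \Theta(S/\gamma)$ it outputs an independent uniform $\pm 1$, ignoring $\x_t$; for $t > T_0$ it uses fresh random bits from its private seed to pick a coordinate $j$ and outputs $b_t^j$. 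Independent seeds across the $N$ copies make their Phase-2 predictions conditionally i.i.d.\ given $y_t$ (distinct coordinates are picked with overwhelming probability once the signal list is long enough). A Hoeffding bound together with the hypothesis $S \geq \ln(\deltainv)/\gamma$ then gives that (\ref{eq:wl-guarantee}) holds with probability at least $1-\delta$ for every $T$: the Phase-1 overhead is absorbed by $S$ when $T$ is small, and for large $T$ the extra edge $\gamma'-\gamma$ absorbs both this overhead and the $\tilde{O}(\sqrt{T})$ concentration error, with the crossover sitting exactly at $T \asymp \ln(\deltainv)/\gamma^2$, which is exactly why the assumption on $S$ is tight.

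For the lower bound on $N$, fix any $t$ in Phase 2 and condition on $y_t$: the vector $(\WL^i(\x_t))_{i=1}^N$ is i.i.d.\ Bernoulli with bias $\gamma'$ toward $y_t$ and independent of the entire history (which reveals only unrelated random bits from earlier rounds). An exchangeability argument shows that for any history-dependent weights $\alpha_t^i$, the error of $\sign(\sum_i \alpha_t^i \WL^i(\x_t))$ is bounded below by the error of the unweighted majority vote of the $\WL^i(\x_t)$. Standard binomial anti-concentration applied to $\text{Binomial}(N, \half+\gamma')$ (via the Berry-Esseen theorem) gives majority-vote error at least $\exp(-O(N\gamma^2))$ up to polynomial factors, and for this to drop below $\epsilon$ one needs $N = \Omega(\gamma^{-2}\ln(1/\epsilon))$.

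For the lower bound on $T$, during Phase 1 the weak-learner outputs are independent of $y_t$, so any measurable function of them predicts $y_t$ with probability exactly $1/2$; the booster must therefore incur $T_0/2 = \Omega(S/\gamma)$ mistakes in Phase 1 regardless of $N$, and none of the information gathered in Phase 1 helps in later rounds. Requiring the overall error rate to be at most $\epsilon$ then forces $T_0/(2T) \leq \epsilon$, i.e.\ $T = \Omega(S/(\epsilon\gamma))$. The most delicate step in this plan is the Bayes-optimality claim in Phase 2: the weights $\alpha_t^i$ are arbitrary measurable functions of the history, so one must use the exchangeability of the $N$ seeds together with their independence from the history to symmetrize any candidate strategy down to the uniform majority; turning the resulting in-expectation lower bound into the required high-probability statement is standard but notationally tedious.
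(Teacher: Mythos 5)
Your construction is essentially the paper's own: labels are uniform coin flips, the weak learners are conditionally independent and correct with probability $\half + \Theta(\gamma)$ after an initial uninformative phase of length $\Theta(S/\gamma)$, the weak-learning guarantee is verified by Azuma--Hoeffding plus the AM--GM step using $S \geq \ln(\deltainv)/\gamma$, and the two bounds follow from the $\exp(-\Theta(N\gamma^2))$ Bayes (majority-vote) risk in phase two and the forced $\Omega(T_0)$ mistakes in phase one. The only differences are cosmetic --- you merge the paper's two separate constructions into a single two-phase one and realize the biased predictions as signals embedded in $\x_t$ (which implicitly restricts the booster to accessing label information only through the weak learners, the same implicit restriction the paper's abstract ``correct with probability $p_t$'' oracle carries) --- though note that your choice $\gamma' = 4\gamma$ exceeds $\half$ when $\gamma$ is near $\tfrac{1}{4}$, so you should take $\gamma' = 2\gamma$ as the paper does.
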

\begin{proof}
The proof of both lower bounds use a similar construction. In either case, all examples' labels are generated uniformly at random from $\{-1,1\}$, and in time period $t$, each weak learner outputs the correct label $y_t$ independently of all other weak learners and other examples with a certain probability $p_t$ to be specified later. Thus, for any $T$, by the Azuma-Hoeffding inequality, with probability at least $1-\delta$, the predictions $\hat{y}_t$ made by the weak learner satisfy
\begin{equation} \label{eq:azuma-am-gm}
	\sum_{t=1}^T \1\{y_t \neq \hat{y}_t\} \leq \sum_{t=1}^T (1-p_t) + \sqrt{2T \ln(\deltainv)} \leq \sum_{t=1}^T (1-p_t) + \gamma T + \frac{\ln(\deltainv)}{2\gamma} 
\end{equation}
where the last inequality follows by the arithmetic mean-geometric mean inequality. We will now carefully choose $p_t$ so that inequality (\ref{eq:azuma-am-gm}) implies inequality (\ref{eq:wl-guarantee}).

For the lower bound on the number of weak learners, we set $p_t = \frac{1}{2}+2\gamma$, so that inequality (\ref{eq:azuma-am-gm}) implies that with probability at least $1-\delta$, the predictions $\hat{y}_t$ made by the weak learner satisfy
\[ \sum_{t=1}^T \1\{y_t \neq \hat{y}_t\} \leq (\half - \gamma)T + \frac{\ln(\deltainv)}{2\gamma} \leq (\half - \gamma)T + S.\]
Thus, the weak online learner has edge $\gamma$ with excess loss $S$. In this case, the Bayes optimal output of a booster using $N$ weak learners is to simply take a majority vote of all the weak learners \citep[see for instance][Chap. 13.2.6]{SchapireFr12}, and the probability that the majority vote is incorrect is $ \Theta(\exp(-8N\gamma^2))$. Setting this error to $\epsilon$ and solving for $N$ gives the desired lower bound.

Now we turn to the lower bound on the sample complexity. We divide the whole process into two phases: for $t \leq T_0 = \frac{S}{4\gamma}$, we set $p_t = \half$, and for $t > T_0$, we set $p_t = \frac{1}{2}+2\gamma$. Now, if $T \leq T_0$, inequality (\ref{eq:azuma-am-gm}) implies that with probability at least $1-\delta$, the predictions $\hat{y}_t$ made by the weak learner satisfy
\begin{equation} \label{eq:sample-complexity-I}
	\sum_{t=1}^T \1\{y_t \neq \hat{y}_t\} \leq (\half + \gamma) T +  \frac{\ln(\deltainv)}{2\gamma} \leq (\half - \gamma)T + S
\end{equation}
using the fact that $T \leq T_0 = \frac{S}{4\gamma}$ and $S \geq \frac{\ln(\deltainv)}{\gamma}$. Next, if $T > T_0$, let $T' = T - T_0$, and again inequality (\ref{eq:azuma-am-gm}) implies that with probability at least $1-\delta$, the predictions $\hat{y}_t$ made by the weak learner satisfy
\begin{equation}
	\sum_{t=1}^T \1\{y_t \neq \hat{y}_t\} \leq \half T_0 + (\half - 2\gamma)T' + \gamma T + \frac{\ln(\deltainv)}{2\gamma} = (\half - \gamma)T + 2\gamma T_0 + \frac{\ln(\deltainv)}{2\gamma} \leq (\half - \gamma)T + S, \label{eq:sample-complexity-II}
\end{equation}
since $S \geq \frac{\ln(\deltainv)}{\gamma}$. Inequalities (\ref{eq:sample-complexity-I}) and (\ref{eq:sample-complexity-II}) imply that the weak online learner has edge $\gamma$ with excess loss $S$.

However, in the first phase (i.e. $t \leq T_0$), since the predictions of the weak learners are uncorrelated with the true labels, it is clear that no matter what the booster does, it makes a mistake with probability $\half$. Thus, it will make $\Omega(T_0)$ mistakes with high probability in the first phase, and thus to achieve $\epsilon$ error rate, it needs at least $\Omega(T_0/\epsilon) = \Omega(\frac{S}{\epsilon\gamma})$ examples.
\end{proof}

\section{An Adaptive Algorithm}\label{sec:AdaBoost.OL}

Although the Online BBM algorithm is optimal, it is unfortunately not adaptive since it requires the knowledge of $\gamma$ as a parameter, which is unknown ahead of time. As discussed in the introduction, adaptivity is essential to the practical performance of boosting algorithms such as AdaBoost.

In this section we thus study adaptive online boosting algorithms using the theory of online loss minimization as the main tool. It is known that boosting can be viewed as trying to find a linear combination of weak hypotheses to minimize the total loss of the training examples, usually using functional gradient descent \citep[see for details][Chap. 7]{SchapireFr12}. AdaBoost, for instance, minimizes the exponential loss. Here, as discussed before, we intuitively want to avoid using exponential loss since it could lead to large weights. Instead, we will consider logistic loss $\ell(s) = \ln(1+\exp(-s))$, which results in an algorithm called AdaBoost.L in the batch setting \citep[Chap. 7]{SchapireFr12}.

In the online setting, we conceptually define $N$ different ``experts'' giving advice on what to predict on the current example $\x_t$. In round $t$, expert $i$ predicts by combining the first $i$ weak learners:
$\hat{y}_t^i = \sign(\sum_{j=1}^i \alpha_t^j \WL^j(\x_t))$. Now as in AdaBoost.L, the weight $w_t^i$ for $\WL^i$ is obtained by computing the logistic loss of the prediction of expert $i-1$, i.e. $\ell(s_t^{i-1})$, and then setting $w_t^i$ to be the negative derivative of the loss:
\[w_{t}^i = -\ell'(s_t^{i-1}) = \frac{1}{1+\exp(s_t^{i-1})} \in [0,1].\]
In terms of the weight of $\WL^i$, i.e. $\alpha_t^i$, ideally we wish to mimic AdaBoost.L and use a fixed $\alpha^i$ for all $t$ such that the total logistic loss is minimized:
$\alpha^i = \arg\min_\alpha \sum_{t=1}^T \ell(s_t^{i-1} + \alpha z_t^i)$.
Of course this is not possible because $\alpha^i$ depends on the future unknown examples. Nevertheless, it turns out that we can almost achieve that using tools from online learning theory. Indeed, one of the fundamental topics in online learning is exactly how to perform almost as well as the best fixed choice ($\alpha^i$) in the hindsight. 

Specifically, it turns out that it suffices to restrict $\alpha$ to the feasible set $[-2, 2]$. Then consider the following simple one dimensional online learning problem: on each round $t$, algorithm predicts $\alpha_{t}^i$ from a feasible set $[-2, 2]$; the environment then reveals loss function $f_t(\alpha) = \ell(s_t^{i-1} + \alpha z_t^i)$ and the algorithm suffers loss $f_t(\alpha_t^i)$. There are many so-called ``low-regret" algorithms in the literature (see the survey by \citet{Shalevshwartz11}) for this problem ensuring 
$$ \sum_{t=1}^T f_t(\alpha_t^i) - \min_{\alpha \in [-2, 2]} \sum_{t=1}^T f_t(\alpha) \leq R_{T}^i,$$
where $R_T^i$ is sublinear in $T$ so that on average it goes to $0$ when $T$ is large and the algorithm is thus doing almost as well as the best constant choice $\alpha^i$. The simplest low-regret algorithm in this case is perhaps {\it online gradient descent} 
\cite{Zinkevich03}:
$$\alpha_{t+1}^i  = \Pi\left(\alpha_t^i - \eta_t f_t'(\alpha_t^i)\right) 
= \Pi\left(\alpha_t^i + \frac{\eta_t z_t^i}{1 + \exp(s_t^{i})}\right), $$
where $\eta_t$ is a time-varying learning rate
and $\Pi$ represents projection onto the set $[-2, 2]$, i.e., 
$\Pi(\cdot) = \max\{-2, \min\{2, \cdot\}\}$.
Since the loss function is actually $1$-Lipschitz ($|f'_t(\alpha)| \leq 1$),
if we set $\eta_t$ to be $4/\sqrt{t}$, 
then standard analysis shows $R_T^{i} = 4\sqrt{T}$.

Finally, it remains to specify the algorithm's final prediction $\hat{y}_t$.
In Online BBM, we simply used the advice of expert $N$. Unfortunately the algorithm described in this section cannot guarantee that expert $N$ will always make highly accurate predictions. However, as we will show in the proof of Theorem \ref{thm:AdaBoost.OL}, the algorithm does ensure that at least {\it one of the $N$ experts} will have high accuracy. Therefore, what we really need to do is to decide which expert to follow on each round, and try to predict almost as well as the best fixed expert in the hindsight. This is again another classic online learning problem (called expert or hedge problem),
and can be solved, for instance, by the well-known Hedge algorithm \citep{LittlestoneWa94, FreundSc97}. The idea is to pick an expert on each round randomly with different importance weights according to their previous performance.

We call the final resulting algorithm AdaBoost.OL (O stands for online and L stands for logistic loss), and summarize it in Algorithm \ref{alg:AdaBoost.OL}. Note that as promised, AdaBoost.OL is an adaptive online boosting algorithm and does not require knowing $\gamma$ in advance. In fact, in the analysis we do not even assume that the weak learners satisfy the bound (\ref{eq:wl-guarantee}). Instead, define the quantities $\gamma_i \triangleq \frac{\w^i \cdot \z^i}{2\|\w^i\|_1}$ for each weak learner $\WL^i$. This can be interpreted as the (weighted) edge over random guessing that $\WL^i$ obtains. Note that $\gamma_i$ may even be negative, which means flipping the sign of $\WL^i$'s predictions performs better than random guessing. Nevertheless, the algorithm can still make accurate predictions even with negative $\gamma_i$ since it will end up choosing negative weights $\alpha_t^i$ in that case. The performance of AdaBoost.OL is provided below.

\begin{algorithm}[tb]
\caption{AdaBoost.OL}
\label{alg:AdaBoost.OL}
\begin{algorithmic}[1]
\STATE {\bfseries Initialize:} 
$\forall i: v_1^i = 1, \alpha_1^i = 0$.
\FOR{$t=1$ {\bfseries to} $T$}
    \STATE Receive example $\x_t$.   
    \FOR{$i=1$ {\bfseries to} $N$}
        \STATE Set $\hat{y}_t^i = \sign(\sum_{j=1}^{i} \alpha_t^j \WL^j(\x_t))$.
    \ENDFOR
    \STATE Randomly pick $i_t$ with $\Pr[i_t = i] \propto v_t^i$.  
    \STATE Predict $\hat{y}_t = \hat{y}_t^{i_t}$, receive label $y_t$. 
    \STATE Set $s_t^0 = 0$.
    \FOR{$i=1$ {\bfseries to} $N$}
    	\STATE Set $z_t^i = y_t \WL^i(\x_t)$.
       	\STATE Set $s_{t}^{i} = s_{t}^{i-1} + \alpha_t^i z_t^i$.
        \STATE Set $\alpha_{t+1}^i = \Pi\left(\alpha_t^i + \frac{\eta_t z_t^i}{1 + \exp(s_t^{i})}\right)$ with $\eta_t = 4/\sqrt{t}$.
        \STATE Pass example $(\x_t, y_t)$ to $\WL^i$ with probability\footnote{Note that we are using the bound $\|\w^i\|_\infty \leq 1$ here.} $p_t^i = w_{t}^i = 1/(1+\exp(s_t^{i-1}))$.
        \STATE Set $v_{t+1}^i = v_{t}^i \cdot \exp(-\1\{y_t \neq \hat{y}_t^i\})$. 
    \ENDFOR
\ENDFOR
\end{algorithmic}
\end{algorithm}

\begin{theorem}\label{thm:AdaBoost.OL}
For any $T$ and $N$, with high probability, the number of mistakes made by AdaBoost.OL is bounded by
\[\frac{2T}{\sum_i\gamma_i^2}  + \tilde{O}\left(\frac{N^2}{\sum_i\gamma_i^2}\right).\]
\end{theorem}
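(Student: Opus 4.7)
The overall plan is to exploit the three layers of online learning built into the algorithm: (i) the online gradient descent (OGD) step that updates each $\alpha_t^i$, (ii) a logistic-loss potential argument that propagates loss control from expert $i-1$ to expert $i$, and (iii) the Hedge update on $v_t^i$ that lets the booster track the best expert. Concretely, I would first show that there exists an expert $i^{\star}$ whose cumulative logistic loss $L_{i^{\star}} = \sum_t \ell(s_t^{i^{\star}})$ is small, then use $M_i \le L_i/\ln 2$ (since $\ell(s) \ge \ln 2 \cdot \1\{s \le 0\}$ and expert $i$ errs exactly when $s_t^i \le 0$) to bound its number of mistakes, and finally pay only an $O(\ln N)$ additive cost via the standard Hedge regret bound to convert this into a bound on the booster's mistakes. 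A short Azuma--Hoeffding step then converts the expected-mistakes bound into a high-probability one, handling the randomized expert choice $i_t$.

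For the core recurrence I would fix $i$ and let $f_t(\alpha) = \ell(s_t^{i-1} + \alpha z_t^i)$. By the cited OGD regret $R_T^i = O(\sqrt{T})$ on the $1$-Lipschitz losses $f_t$ over the interval $[-2,2]$,
\[ L_i \;=\; \sum_t f_t(\alpha_t^i) \;\le\; \min_{\alpha \in [-2,2]} \sum_t f_t(\alpha) + R_T^i. \]
The $\tfrac{1}{4}$-smoothness of $\ell$ (plus $-\ell'(s) = w(s)$ and $z_t^i \in \{-1,1\}$) and the identity $\w^i \cdot \z^i = 2\gamma_i \|\w^i\|_1$ then give the quadratic upper bound $\sum_t f_t(\alpha) \le L_{i-1} - 2\alpha \gamma_i \|\w^i\|_1 + \alpha^2 T/8$. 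Picking the (clipped) minimizer in $\alpha$, and using the elementary fact that $\|\w^i\|_1 \ge L_{i-1}/\Theta(N)$---which follows from $|s_t^{i-1}| \le 2(i-1) \le 2N$ together with the inequality $\ell(s) \le (|s|+2)\, w(s)$ for logistic loss---yields a recurrence roughly of the form
\[ L_i \;\le\; L_{i-1} - c\, \gamma_i^2\, L_{i-1}^2 / T + R_T^i. \]
The key move is to pass to reciprocals: $1/L_i - 1/L_{i-1} \ge c\gamma_i^2 / T$ telescopes over $i = 1,\ldots,N$ to $1/L_N \ge c \sum_i \gamma_i^2 / T$, hence $L_N \lesssim T/\sum_i \gamma_i^2$. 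Mistake-count control for some expert follows, while the accumulated OGD regrets and the Hedge $\ln N$ term stack on top as the lower-order $\tilde O(N^2/\sum_i \gamma_i^2)$ slack.

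The main obstacle is getting the sharp leading constant of $2$ and ensuring that every factor of $N$ lands in the additive $\tilde O(N^2/\sum_i \gamma_i^2)$ term rather than multiplying $T/\sum_i \gamma_i^2$. The reciprocal telescoping is not completely clean once the $R_T^i$ are reintroduced, since they can make $L_i > L_{i-1}$ on some rounds; I would need to carry $\sum_i R_T^i/L_{i-1}^2$ as a noise term and argue it is absorbed. The clipped-$\alpha$ regime (when the unconstrained minimizer falls outside $[-2,2]$) also needs a separate case, where I expect the decrease to be so large---on the order of $\Theta(\gamma_i^2 T)$---that it is easily absorbed into the slack. Finally, the $\|\w^i\|_1 \gtrsim L_{i-1}/N$ step has to be done carefully, as any constant lost there directly inflates both the leading constant and the additive term.
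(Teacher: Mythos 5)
There is a genuine gap, and it sits exactly where you flagged a risk but underestimated it: the combination of the smoothness bound with the estimate $\|\w^i\|_1 \gtrsim L_{i-1}/N$ loses a factor of $N^2$ on the \emph{leading} term, not a constant. Concretely, your quadratic upper bound $\sum_t f_t(\alpha) \le L_{i-1} - 2\alpha\gamma_i\|\w^i\|_1 + \alpha^2 T/8$ yields, after optimizing $\alpha$, a per-stage decrease of order $\gamma_i^2\|\w^i\|_1^2/T$ --- quadratic in $\|\w^i\|_1$. Substituting $\|\w^i\|_1 \ge L_{i-1}/\Theta(N)$ squares the $\Theta(N)$, giving a decrease of $\Theta(\gamma_i^2 L_{i-1}^2/(N^2 T))$, and the reciprocal telescoping then produces $1/L_N \ge \Theta(\sum_i \gamma_i^2/(N^2 T))$, i.e. $L_N = O(N^2 T/\sum_i\gamma_i^2)$. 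Via $M_N \le L_N/\ln 2$ this gives a mistake bound whose main term is $\Theta(N^2 T/\sum_i\gamma_i^2)$, which cannot be absorbed into the additive $\tilde O(N^2/\sum_i\gamma_i^2)$ slack. Moreover the $\Theta(N)$ in $\|\w^i\|_1 \ge L_{i-1}/\Theta(N)$ is not an artifact of loose constants: when expert $i-1$ makes confident mistakes ($s_t^{i-1}$ very negative), $\ell(s_t^{i-1}) \approx |s_t^{i-1}|$ can be as large as $2N$ while $w_t^i \approx 1$, so the ratio $L_{i-1}/\|\w^i\|_1$ genuinely reaches $\Theta(N)$ and no refinement of the inequality $\ell(s) \le (|s|+2)w(s)$ can remove it.

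The fix is to change both ingredients, which is what the paper does. First, instead of the $\tfrac14$-smoothness bound, use the exact identity $\ell(s_t^{i-1}+\alpha z_t^i) - \ell(s_t^{i-1}) = \ln(1 + w_t^i(e^{-\alpha z_t^i}-1)) \le w_t^i(e^{-\alpha z_t^i}-1)$ and minimize $\sigma_i e^{-\alpha} + (1-\sigma_i)e^{\alpha}$ over $\alpha \in [-2,2]$ (the paper's Lemma~\ref{lem:inequality}); this gives a decrease of $2\gamma_i^2\|\w^i\|_1$, \emph{linear} in $\|\w^i\|_1$ rather than quadratic over $T$. Second, lower-bound $\|\w^i\|_1$ by the mistake count $M_{i-1}/2$ rather than by $L_{i-1}/\Theta(N)$: whenever expert $i-1$ errs, $s_t^{i-1} \le 0$ so $w_t^i \ge 1/2$, a constant, so no factor of $N$ appears. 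The per-stage inequality $\Delta_i \le -\gamma_i^2 M_{i-1} + 4\sqrt{T}$ then telescopes \emph{linearly} (the logistic losses cancel, leaving $\sum_i \gamma_i^2 M_{i-1} \le T\ln 2 + 4N\sqrt{T}$), and $\min_i M_i \le (\ln 2\, T + 4N\sqrt{T})/\sum_i\gamma_i^2$ follows directly --- no reciprocal telescoping, and no need for your $M_i \le L_i/\ln 2$ conversion. Your outer structure (OGD regret, Hedge with its factor of $2$ and $\ln N$, Azuma--Hoeffding, and AM--GM to fold $N\sqrt{T}$ into $T/2 + O(N^2)$) matches the paper and is fine as is.
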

\begin{proof}
Let the number of mistakes made by expert $i$ be $M_i \triangleq \sum_{t=1}^T \1\{y_t \neq \hat{y}_t^i\}$, also define $M_0 = T$ for convenience. Note that AdaBoost.OL is using a variant of the Hedge algorithm with $\1\{y_t \neq \hat{y}_t^i\}$ being the loss of expert $i$ on round $t$ (Line 7 and 15). So by standard analysis \citep[see e.g.][Corollary 2.3]{CesabianchiLu06}, and the Azuma-Hoeffding inequality, we have with high probability
\begin{equation} \label{eq:mw}
	\sum_{t=1}^T \1\{y_t \neq \hat{y}_t \}  \leq 2\min_i M_i + 2 \ln(N) + \tilde{O}(\sqrt{T}).
\end{equation}

Now, whenever expert $i-1$ makes a mistake (i.e. $s_t^{i-1} \leq 0$),
we have $w_t^i = 1/(1+\exp(s_t^{i-1})) \geq 1/2$ and therefore 
\begin{equation}\label{equ:lower_bound_weights}
\|\w^i\|_1\geq M_{i-1}/2.
\end{equation}
Note that Eq. \eqref{equ:lower_bound_weights} holds even for $i=1$ by the definition of $M_0$.
We now bound the difference between the logistic loss of two successive experts,
$\Delta_i \triangleq \sum_{t=1}^T \left(\ell(s_t^i) - \ell(s_t^{i-1})\right)$.
Online gradient descent (Line 13) ensures that
\begin{equation} \label{eq:ogd}
	\sum_{t=1}^T \ell(s_t^i)  \leq \min\limits_{\alpha\in [-2,2]} 
\sum_{t=1}^T \ell(s_t^{i-1} + \alpha z_t^i) + 4\sqrt{T},
\end{equation}
as discussed previously.
On the other hand, direct calculation shows
$ \ell(s_t^{i-1} + \alpha z_t^i) - \ell(s_t^{i-1}) = \ln\left(1 + w_t^i (e^{-\alpha z_t^i}-1)\right)
\leq w_t^i (e^{-\alpha z_t^i}-1) $.
With $\sigma_i \triangleq \sum_{t=1}^T \frac{w_t^i}{\|\w^i\|_1} \1\{z_t^i = 1\} = \half + \gamma_i$, we thus have
\begin{align}
\min\limits_{\alpha\in [-2, 2]}\sum_{t=1}^T \left(\ell(s_t^{i-1} + \alpha z_t^i) - \ell(s_t^{i-1})\right) &\leq \min\limits_{\alpha\in [-2, 2]}\|\w^i\|_1(\sigma_i e^{-\alpha} + (1-\sigma_i) e^\alpha - 1) \notag \\
&\leq -\tfrac{1}{2}\|\w^i\|_1(2\sigma_i - 1)^2 \label{eq:ineq1}\\
&= -2\gamma_i^2\|\w^i\|_1 \label{eq:ineq2}\\
&\leq -\gamma_i^2 M_{i-1}. \label{eq:ineq3}
\end{align}
Here, inequality (\ref{eq:ineq1}) follows from Lemma~\ref{lem:inequality} and inequality (\ref{eq:ineq3}) from inequality (\ref{equ:lower_bound_weights}).
The above inequality and inequality (\ref{eq:ogd}) imply that
\[ \Delta_i \leq -\gamma_i^2 M_{i-1} + 4\sqrt{T}.\]
Summing over $i = 1, \ldots, N$ and rearranging gives 
\[ \sum_{i=1}^N \gamma_i^2 M_{i-1} + \sum_{t=1}^T \ell(s_t^N) \leq \sum_{t=1}^T \ell(0) + 4N\sqrt{T} \]
which implies that
\[ \min_i M_i \leq \min_i M_{i-1} \leq \frac{\ln(2)}{\sum_i \gamma_i^2} T + \frac{4N}{\sum_i\gamma_i^2}\sqrt{T}\]
since $M_i \leq M_0$ for all $i$, $\ell(s_t^N) \geq 0$ for all $t$ and $\ell(0) = \ln(2)$. Using this bound in inequality (\ref{eq:mw}), we get
\[
\sum_{t=1}^T \1\{y_t \neq \hat{y}_t \} \leq \frac{2 \ln(2) T}{\sum_i\gamma_i^2} + \tilde{O}\left(\frac{N\sqrt{T}}{\sum_i\gamma_i^2} + \ln(N)\right) \leq \frac{2T}{\sum_i\gamma_i^2}  + \tilde{O}\left(\frac{N^2}{\sum_i\gamma_i^2}\right),
\]
where the last inequality follows from the bound $\frac{cN\sqrt{T}}{\sum_i\gamma_i^2} \leq \frac{T}{2\sum_i\gamma_i^2} + \frac{c^2N^2}{2\sum_i\gamma_i^2}$, where $c$ is the hidden $\tilde{O}(1)$ factor in the $\tilde{O}(\frac{N\sqrt{T}}{\sum_i\gamma_i^2})$ term, using the arithmetic mean-geometric mean inequality. 
\end{proof}

For the case when the weak learners do satisfy the bound (\ref{eq:wl-guarantee}), we get the following bound on the number of errors:
\begin{theorem}
	If the weak learners satisfy (\ref{eq:wl-guarantee}), then for any $T$ and $N$, with high probability, the number of mistakes made by AdaBoost.OL is bounded by
	\[  \frac{8T}{\gamma^2 N}  + \tilde{O}\left(\frac{N}{\gamma^2} + \frac{S}{\gamma}\right),\]
	Thus, in order to achieve error rate $\epsilon$, it suffices to use $N \geq \frac{8}{\epsilon\gamma^2}$ weak learners, which gives an excess loss bound of $\tilde{O}(\frac{S}{\gamma} + \frac{1}{\epsilon \gamma^4})$.
\end{theorem}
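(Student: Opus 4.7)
The plan is to revisit the proof of Theorem~\ref{thm:AdaBoost.OL} and plug in the quantitative bound that Lemma~\ref{lem:WLA} provides under the weak-learning assumption. In AdaBoost.OL the weights satisfy $w_t^i = 1/(1+\exp(s_t^{i-1})) \in [0,1]$, so $\|\w^i\|_\infty \leq 1$ and Lemma~\ref{lem:WLA} specializes to $\w^i \cdot \z^i \geq \gamma\|\w^i\|_1 - \tS$ with $\tS = 2S + \tilde O(\tfrac{1}{\gamma})$. A naive route through Theorem~\ref{thm:AdaBoost.OL} as a black box---i.e., converting this into a pointwise lower bound $\gamma_i \geq \tfrac{\gamma}{2} - \tfrac{\tS}{2\|\w^i\|_1}$ and then summing $\gamma_i^2$---throws away too much when $\|\w^i\|_1$ is small, since there $\gamma_i$ may even be negative. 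Instead I would reopen the proof and work at the level of the per-round logistic-loss decrement $\Delta_i \leq -2\gamma_i^2\|\w^i\|_1 + 4\sqrt T$ (inequality~(\ref{eq:ogd}) combined with (\ref{eq:ineq2})), which couples directly with the weighted weak-learning guarantee.

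The core algebraic step is the uniform lower bound
\[
2\gamma_i^2 \|\w^i\|_1 \;\geq\; \tfrac{\gamma^2}{2}\|\w^i\|_1 - \gamma\tS,
\]
which I would obtain by a short case analysis on whether $\gamma\|\w^i\|_1 \geq \tS$. When it does, using $2\gamma_i\|\w^i\|_1 = \w^i \cdot \z^i \geq \gamma\|\w^i\|_1 - \tS \geq 0$ and squaring gives $4\gamma_i^2 \|\w^i\|_1 \geq (\gamma\|\w^i\|_1 - \tS)^2/\|\w^i\|_1 \geq \gamma^2\|\w^i\|_1 - 2\gamma\tS$; when it does not, the right-hand side is non-positive so the inequality holds trivially from $\gamma_i^2 \geq 0$. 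The bound is genuinely robust to small $\|\w^i\|_1$, exactly where any sign-preserving lower bound on $\gamma_i$ is unavailable.

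Substituting this into $\Delta_i \leq -2\gamma_i^2\|\w^i\|_1 + 4\sqrt T$, summing over $i=1,\ldots,N$, and telescoping yield $\sum_t \ell(s_t^N) - T\ln 2 \leq -\tfrac{\gamma^2}{2}\sum_i \|\w^i\|_1 + N\gamma\tS + 4N\sqrt T$; dropping the nonnegative $\sum_t \ell(s_t^N)$ and invoking the mistake-to-weight inequality $\|\w^i\|_1 \geq M_{i-1}/2$ from (\ref{equ:lower_bound_weights}) then gives
\[
\min_i M_{i-1} \;\leq\; \frac{4T\ln 2}{N\gamma^2} + \frac{4\tS}{\gamma} + \frac{16\sqrt T}{\gamma^2}.
\]
Plugging this into the Hedge bound (\ref{eq:mw}) from the proof of Theorem~\ref{thm:AdaBoost.OL}, and absorbing the leftover $\sqrt T/\gamma^2$ term into $T/(N\gamma^2) + O(N/\gamma^2)$ via the same AM-GM trick used there (choosing the AM-GM parameter small enough that the coefficient in front of $T/(N\gamma^2)$ stays at most $8$) delivers the claimed bound $\frac{8T}{\gamma^2 N} + \tilde O(\frac{N}{\gamma^2} + \frac{S}{\gamma})$; setting $N = 8/(\epsilon\gamma^2)$ then yields error rate $\epsilon$ with excess loss $\tilde O(\frac{S}{\gamma} + \frac{1}{\epsilon\gamma^4})$. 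The main obstacle is the case-analysis inequality above: once one has it, everything else is bookkeeping inherited from Theorem~\ref{thm:AdaBoost.OL}, but the constants must be tracked carefully to match the stated coefficient $8$.
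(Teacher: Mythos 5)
Your proposal is correct and follows essentially the same route as the paper: both reopen the proof of Theorem~\ref{thm:AdaBoost.OL}, replace inequality (\ref{eq:ineq2})--(\ref{eq:ineq3}) with the bound $2\gamma_i^2\|\w^i\|_1 \geq \tfrac{\gamma^2}{2}\|\w^i\|_1 - \gamma\tS$ obtained from Lemma~\ref{lem:WLA} (your case analysis and the paper's fact that $a \geq b-c$ implies $a^2 \geq b^2-2bc$ for $b,c\geq 0$ yield the identical inequality, and both are robust to small $\|\w^i\|_1$ and negative $\gamma_i$), and then carry through the telescoping, Hedge, and AM-GM bookkeeping unchanged.
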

\begin{proof}
	The proof is on the same lines as that of Theorem~\ref{thm:AdaBoost.OL}. The only change is that in inequality (\ref{eq:ineq2}), we use the bound 
	$\gamma_i^2 \geq \frac{\gamma^2}{4} - \frac{\gamma \tS}{2\|\w^i\|_1}$ which follows from Lemma~\ref{lem:WLA} using the fact that $a \geq b - c$ implies $a^2 \geq b^2 - 2bc$ for non-negative $a, b$ and $c$, and the fact that $\|\w^i\|_\infty \leq 1$. This leads to the following change in inequality (\ref{eq:ineq3}):
	\[ \min\limits_{\alpha\in [-2, 2]}\sum_{t=1}^T \left(\ell(s_t^{i-1} + \alpha z_t^i) - \ell(s_t^{i-1})\right)  \leq -\frac{\gamma^2}{4}M_{i-1} + \gamma \tS.\]
	Continuing using this bound in the proof and simplifying, we get the stated bound on the number of errors.
\end{proof}

The following lemma is a simple calculation:
\begin{lemma} \label{lem:inequality}
	For any $\sigma \in [0, 1]$, 
	\[\min\limits_{\alpha\in [-2, 2]} \sigma e^{-\alpha} + (1-\sigma) e^\alpha \leq 1 - \half(2\sigma - 1)^2. \]
\end{lemma}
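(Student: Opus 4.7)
The plan is to bypass any serious constrained-optimization argument by exhibiting one convenient choice of $\alpha$ inside the feasible set $[-2,2]$ that already achieves the bound. Since we only need an upper bound on the minimum, a single well-chosen value of $\alpha$ suffices. The natural candidate is $\alpha = 2\sigma - 1$, which automatically lies in $[-1,1] \subset [-2,2]$ for every $\sigma \in [0,1]$, and which makes the two exponents symmetric in a way that simplifies the expression.

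With that choice, the substitution $u = 2\sigma - 1 \in [-1,1]$ (so that $\sigma = (1+u)/2$ and $1-\sigma = (1-u)/2$) rewrites the objective as
\[
\sigma e^{-\alpha} + (1-\sigma) e^\alpha = \tfrac{1+u}{2} e^{-u} + \tfrac{1-u}{2} e^u = \cosh(u) - u \sinh(u),
\]
using $e^{-u} - e^u = -2\sinh(u)$. Thus the lemma reduces to the one-variable inequality $\cosh(u) - u \sinh(u) \leq 1 - \tfrac{u^2}{2}$.

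I would prove this reduced inequality by a single derivative computation. Define $g(u) = 1 - \tfrac{u^2}{2} - \cosh(u) + u \sinh(u)$; the goal is $g(u) \geq 0$. A short calculation gives $g'(u) = -u - \sinh(u) + \sinh(u) + u\cosh(u) = u(\cosh(u) - 1)$. Because $\cosh(u) \geq 1$ everywhere, $g'(u)$ has the same sign as $u$, so $g$ attains its global minimum at $u = 0$, where $g(0) = 1 - 1 = 0$. Hence $g(u) \geq 0$ for all $u \in \mathbb{R}$, and in particular on $[-1,1]$, which establishes the claim.

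The main (small) obstacle is really just identifying the right choice $\alpha = 2\sigma - 1$ and noticing that the $[-2,2]$ constraint is never binding. Once the substitution is made, the derivative cancellations make the final monotonicity argument essentially automatic.
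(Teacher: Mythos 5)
Your proof is correct, and it takes a genuinely different route from the paper's. The paper plugs in (a clipped version of) the exact unconstrained minimizer $\alpha = \tfrac{1}{2}\ln\bigl(\tfrac{\sigma}{1-\sigma}\bigr)$, which yields the clean value $\sqrt{4\sigma(1-\sigma)}$ and then the bound via $\sqrt{1-x}\le 1-\tfrac{x}{2}$; but because that minimizer blows up as $\sigma\to 0$ or $1$, the paper must split into cases ($\sigma\in[0.5,0.95]$ versus $\sigma\in(0.95,1]$) and handle the boundary regime with a hard-coded $\alpha$ and a numerical check. Your choice $\alpha = 2\sigma-1$ is exactly the linearization of that optimal $\alpha$ around $\sigma=\tfrac12$, and since it always lies in $[-1,1]$ the constraint is never active, eliminating the case analysis entirely. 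The price is that the resulting expression $\cosh(u)-u\sinh(u)$ is not as immediately comparable to $1-\tfrac{u^2}{2}$, so you need the short calculus argument; your computation $g'(u)=u(\cosh(u)-1)$ is correct, the sign analysis does place the global minimum of $g$ at $u=0$ with $g(0)=0$, and the substitution identities ($\sigma=\tfrac{1+u}{2}$, $e^{-u}-e^{u}=-2\sinh(u)$) all check out. Net: your argument is uniform in $\sigma$ and arguably cleaner; the paper's is computation-free at the expense of an ad hoc threshold.
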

\begin{proof}
	It suffice to prove the bound for $\sigma \geq \half$; the bound for $\sigma < \half$ follows by simply using the bound for $1-\sigma$.	For $\sigma \in [0.5, 0.95]$, setting $\alpha = \half \ln(\frac{\sigma}{1-\sigma}) \in [-2, 2]$ gives 
	\[\sigma e^{-\alpha} + (1-\sigma) e^\alpha = \sqrt{4\sigma(1-\sigma)} \leq 1 - \half(2\sigma - 1)^2,\]
	since $\sqrt{1 - x} \leq 1 - \half x$ for $x \in [0, 1]$. For $\sigma \in (0.95, 1]$, setting $\alpha = \half \ln(\frac{0.95}{0.05}) \in [-2, 2]$ we have
	\begin{align*}
		&\sigma e^{-\alpha} + (1-\sigma) e^\alpha \leq 0.95 e^{-\alpha} + 0.05 e^{\alpha} = \sqrt{0.19} \\
		&\leq \half \leq 1 - \half(2\sigma - 1)^2.
	\end{align*}
\end{proof}

Although the number of weak learners and excess loss for Adaboost.OL are  suboptimal, the adaptivity of AdaBoost.OL is an appealing feature and leads to good performance in experiments. The possibility of obtaining an algorithm that is both adaptive and optimal is left as an open question.

\section{Experiments}\label{sec:experiments}

While the focus of this paper is a theoretical investigation of online boosting, we have also performed experiments to evaluate our algorithms.

We extended the Vowpal Wabbit open source machine learning system~\cite{VW} to include the algorithms studied in this paper. We used VW's default base learning algorithm as our weak learner, tuning only the learning rate. The online boosting algorithms implemented were Online BBM, AdaBoost.OL, OSBoost (using uniform weighting on the weak learners) and OSBoost.OCP from~\citep{ChenLiLu12}, all using importance weighted examples in VW. We also implemented AdaBoost.OL.S, which is the version of AdaBoost.OL where examples sent to VW are sampled rather than weighted.

All experiments were done on a diverse collection of 13 publically available
datasets. The datasets come from the UCI repository, KDD Cup challenges, and the HCRC Map Task Corpus. A description of these datasets is given in Table~\ref{tab:datasets}.
\begin{table}[t] 
\caption{Below, $d$ is the number of unique features in the dataset, 
and $s$ is the average number of features per example.}
\label{tab:datasets}
\begin{center}
\begin{tabular}{|c||r|c|r|}

\hline
Dataset & instances & $s$ & $d$ \\
\hline
\hline
20news &  18,845  &  93.9  & 101,631
\\
a9a &  48,841  &  13.9  & 123
\\
activity &  165,632  &  18.5  & 20
\\
adult &  48,842  &  12.0  & 105
\\
bio &  145,750  &  73.4  & 74
\\
census &  299,284  &  32.0  & 401
\\
covtype &  581,011  &  11.9  & 54
\\
letter &  20,000  &  15.6 & 16
\\
maptaskcoref &  158,546  &  40.4  & 5,944
\\
nomao &  34,465  &  82.3  & 174
\\
poker&  946,799  &  10.0  & 10
\\
rcv1&  781,265  &  75.7  & 43,001
\\
vehv2binary &  299,254  &  48.6  & 105  \\
\hline
\end{tabular}
\end{center}
\end{table}
For each dataset, we performed a random split with 80\% of the data used for training and the remaining 20\% for testing. We tuned the learning rate, the number of weak learners, and the edge parameter $\gamma$ (for all but AdaBoost.OL) using progressive validation 0-1 loss on the training set. Reported is the 0-1 loss on the test set.

\begin{table}[!ht] 
\caption{Performance of various online boosting algorithms on various datasets. The lowest loss attained for each dataset is bolded. The baseline is the loss obtained by running the weak learner, VW, on the data.}
\label{tab:exp-results}
\begin{center}
{\small
\begin{tabular}{|c|c|c|c|c|c|c|}

\hline

Dataset & VW baseline & Online BBM & AdaBoost.OL & AdaBoost.OL.S & OSBoost.OCP & OSBoost \\
\hline
\hline
20news   & 0.0812 & {\bf 0.0775} & 0.0777 & 0.0777 & 0.0791 & 0.0801 \\
a9a      & 0.1509  & {\bf 0.1495}  & 0.1497  & 0.1497 & 0.1509 & 0.1505 \\
activity & 0.0133 & {\bf 0.0114} & 0.0128 & 0.0127 & 0.0130 & 0.0133 \\
adult    & 0.1543  & {\bf 0.1526} & 0.1536  & 0.1536 & 0.1539 & 0.1544 \\
bio      & 0.0035 & {\bf 0.0031} & 0.0032 & 0.0032 & 0.0033 & 0.0034 \\
census   & 0.0471 & {\bf 0.0469} & {\bf 0.0469} & {\bf 0.0469} & {\bf 0.0469} & 0.0470 \\
covtype  & 0.2563 &  {\bf 0.2347}  & 0.2495 & 0.2450 & 0.2470 & 0.2521 \\
letter &  0.2295    & {\bf 0.1923} & 0.2078 & 0.2078 & 0.2148 & 0.2150 \\
maptaskcoref & 0.1091 & {\bf 0.1077} & 0.1083 & 0.1083 & 0.1093 & 0.1091 \\
nomao    & 0.0641 & {\bf 0.0627} & 0.0635 & 0.0635 & {\bf 0.0627} & 0.0633 \\
poker    & 0.4555 & {\bf 0.4312} & 0.4555 & 0.4555 & 0.4555 & 0.4555 \\
rcv1&   0.0487 & 0.0485 & {\bf 0.0484} & {\bf 0.0484} & 0.0488 & 0.0488 \\
vehv2binary & 0.0292 & 0.0286 & 0.0291 & 0.0291 & {\bf 0.0284} & 0.0286\\
\hline
\end{tabular}}
\end{center}
\end{table}

It should be noted that the VW baseline is already a strong learner. The results obtained are given in Table~\ref{tab:exp-results}. As can be seen, for most datasets, Online BBM had the best performance. The average improvement of Online BBM over the baseline was 5.14\%. For AdaBoost.OL, it was
2.57\%. Using sampling in AdaBoost.OL (i.e. AdaBoost.OL.S) boosts the average to 2.67\%.  The average improvement for OSBoost.OCP was 1.98\%, followed by OSBoost with 1.13\%.

\bibliography{ref}

\begin{thebibliography}{25}
\providecommand{\natexlab}[1]{#1}
\providecommand{\url}[1]{\texttt{#1}}
\expandafter\ifx\csname urlstyle\endcsname\relax
  \providecommand{\doi}[1]{doi: #1}\else
  \providecommand{\doi}{doi: \begingroup \urlstyle{rm}\Url}\fi

\bibitem[Barak et~al.(2009)Barak, Hardt, and Kale]{BarakHaKa09}
Boaz Barak, Moritz Hardt, and Satyen Kale.
\newblock The uniform hardcore lemma via approximate bregman projections.
\newblock In \emph{The twentieth Annual ACM-SIAM Symposium on Discrete
  Algorithms}, pages 1193--1200, 2009.

\bibitem[Bartlett et~al.(2008)Bartlett, Dani, Hayes, Kakade, Rakhlin, and
  Tewari]{BDHKRT}
Peter~L. Bartlett, Varsha Dani, Thomas Hayes, Sham Kakade, Alexander Rakhlin,
  and Ambuj Tewari.
\newblock High-probability regret bounds for bandit online linear optimization.
\newblock In \emph{Proceedings of the 21st Annual Conference on Learning Theory
  (COLT 2008)}, pages 335--342, 2008.

\bibitem[Ben-David et~al.(2009)Ben-David, P\'{a}l, and Shalev-Shwartz]{DPS}
Shai Ben-David, D\'{a}vid P\'{a}l, and Shai Shalev-Shwartz.
\newblock {Agnostic Online Learning}.
\newblock In \emph{COLT 2009}, 2009.

\bibitem[Bradley and Schapire(2008)]{BradleySc08}
Joseph~K. Bradley and Robert~E. Schapire.
\newblock {FilterBoost}: Regression and classification on large datasets.
\newblock In \emph{Advances in Neural Information Processing Systems 20}, 2008.

\bibitem[Bshouty and Gavinsky(2003)]{BshoutyGa03}
Nader~H Bshouty and Dmitry Gavinsky.
\newblock On boosting with polynomially bounded distributions.
\newblock \emph{The Journal of Machine Learning Research}, 3:\penalty0
  483--506, 2003.

\bibitem[Cesa-Bianchi and Lugosi(2006)]{CesabianchiLu06}
Nicol\`o Cesa-Bianchi and G\'{a}bor Lugosi.
\newblock \emph{Prediction, Learning, and Games}.
\newblock Cambridge University Press, 2006.

\bibitem[Chen et~al.(2012)Chen, Lin, and Lu]{ChenLiLu12}
Shang-Tse Chen, Hsuan-Tien Lin, and Chi-Jen Lu.
\newblock {An Online Boosting Algorithm with Theoretical Justifications}.
\newblock In \emph{Proceedings of the 29th International Conference on Machine
  Learning}, 2012.

\bibitem[Chen et~al.(2014)Chen, Lin, and Lu]{ChenLiLu14}
Shang-Tse Chen, Hsuan-Tien Lin, and Chi-Jen Lu.
\newblock {Boosting with Online Binary Learners for the Multiclass Bandit
  Problem}.
\newblock In \emph{Proceedings of the 31st International Conference on Machine
  Learning}, 2014.

\bibitem[Freund(1992)]{Freund92}
Yoav Freund.
\newblock An improved boosting algorithm and its implications on learning
  complexity.
\newblock In \emph{Proceedings of the Fifth Annual ACM Workshop on
  Computational Learning Theory}, pages 391--398, July 1992.

\bibitem[Freund(1993)]{Freund93}
Yoav Freund.
\newblock \emph{Data Filtering and Distribution Modeling Algorithms for Machine
  Learning}.
\newblock PhD thesis, University of California at Santa Cruz, 1993.

\bibitem[Freund(1995)]{Freund95}
Yoav Freund.
\newblock Boosting a weak learning algorithm by majority.
\newblock \emph{Information and Computation}, 121\penalty0 (2):\penalty0
  256--285, 1995.

\bibitem[Freund and Schapire(1997)]{FreundSc97}
Yoav Freund and Robert~E. Schapire.
\newblock A decision-theoretic generalization of on-line learning and an
  application to boosting.
\newblock \emph{Journal of Computer and System Sciences}, 55\penalty0
  (1):\penalty0 119--139, August 1997.

\bibitem[Grabner and Bischof(2006)]{GrabnerBi06}
Helmut Grabner and Horst Bischof.
\newblock On-line boosting and vision.
\newblock In \emph{CVPR}, volume~1, pages 260--267, 2006.

\bibitem[Grabner et~al.(2008)Grabner, Leistner, and Bischof]{GrabnerLeBi08}
Helmut Grabner, Christian Leistner, and Horst Bischof.
\newblock Semi-supervised on-line boosting for robust tracking.
\newblock In \emph{ECCV}, pages 234--247, 2008.

\bibitem[Littlestone and Warmuth(1994)]{LittlestoneWa94}
Nick Littlestone and Manfred~K. Warmuth.
\newblock The weighted majority algorithm.
\newblock \emph{Information and Computation}, 108:\penalty0 212--261, 1994.

\bibitem[Liu and Yu(2007)]{LiuYu07}
Xiaoming Liu and Ting Yu.
\newblock Gradient feature selection for online boosting.
\newblock In \emph{ICCV}, pages 1--8, 2007.

\bibitem[Luo and Schapire(2014)]{LuoSc14b}
Haipeng Luo and Robert~E. Schapire.
\newblock {A Drifting-Games Analysis for Online Learning and Applications to
  Boosting}.
\newblock In \emph{Advances in Neural Information Processing Systems 27}, 2014.

\bibitem[Mason et~al.(2000)Mason, Baxter, Bartlett, and Frean]{MasonBaBaFr99b}
Llew Mason, Jonathan Baxter, Peter Bartlett, and Marcus Frean.
\newblock Functional gradient techniques for combining hypotheses.
\newblock In \emph{Advances in Large Margin Classifiers}. MIT Press, 2000.

\bibitem[Oza and Russell(2001)]{OzaRu01}
Nikunj~C. Oza and Stuart Russell.
\newblock Online bagging and boosting.
\newblock In \emph{Eighth International Workshop on Artificial Intelligence and
  Statistics}, pages 105--112, 2001.

\bibitem[Schapire(2001)]{Schapire01}
Robert~E. Schapire.
\newblock Drifting games.
\newblock \emph{Machine Learning}, 43\penalty0 (3):\penalty0 265--291, June
  2001.

\bibitem[Schapire and Freund(2012)]{SchapireFr12}
Robert~E. Schapire and Yoav Freund.
\newblock \emph{Boosting: Foundations and Algorithms}.
\newblock MIT Press, 2012.

\bibitem[Servedio(2003)]{Servedio03}
Rocco~A. Servedio.
\newblock Smooth boosting and learning with malicious noise.
\newblock \emph{Journal of Machine Learning Research}, 4:\penalty0 633--648,
  2003.

\bibitem[Shalev-Shwartz(2011)]{Shalevshwartz11}
Shai Shalev-Shwartz.
\newblock Online learning and online convex optimization.
\newblock \emph{Foundations and Trends in Machine Learning}, 4\penalty0
  (2):\penalty0 107--194, 2011.

\bibitem[VW()]{VW}
VW.
\newblock URL \url{https://github.com/JohnLangford/vowpal_wabbit/}.

\bibitem[Zinkevich(2003)]{Zinkevich03}
Martin Zinkevich.
\newblock Online convex programming and generalized infinitesimal gradient
  ascent.
\newblock In \emph{Proceedings of the Twentieth International Conference on
  Machine Learning}, 2003.

\end{thebibliography}
\bibliographystyle{plainnat}

\appendix

\section{Proof of Lemma \ref{lem:WLA}}\label{app:lem:WLA}
\begin{proof}
	Fix a weak learner, say $\WL^i$. Let 
	\[U = \{t: (\x_t, y_t) \text{ passed to } \WL^i\}.\]	
	Since inequality (\ref{eq:wl-guarantee}) holds even for {\em adaptive} adversaries, with high probability we have
	\begin{equation} \label{eq:mistake-bound}
		\sum_{t=1}^T \1\{\WL^i(\x_t) \neq y_t\}\1\{t \in U\} \leq (\half - \gamma)|U| + S.
	\end{equation}
	
	Now fix the internal randomness of $\WL^i$. Note that $\E_t[\1\{t \in U\}] = p_t^i = \frac{w_t^i}{\|\w^i\|_\infty}$, where $\E_t[\cdot]$ is the expectation conditioned on all the randomness of the booster until (and not including) round $t$. Define $\sigma = \sum_{t=1}^T p_t^i$.

	We now show using martingale concentration bounds that with high probability,
	\begin{equation}
		\sum_{t=1}^T\1\{\WL^i(\x_t) \neq y_t\} p_t^i \leq \sum_{t=1}^T \1\{\WL^i(\x_t) \neq y_t\}\1\{t \in U\} + \tilde{O}\left(\sqrt{\sigma}\right) \label{eq:martingale1}
	\end{equation}
	and
	\begin{equation} \label{eq:martingale2}
		|U| \leq \sigma + \tilde{O}\left(\sqrt{\sigma}\right).
	\end{equation}
	Here, the $\tilde{O}(\cdot)$ notation suppresses dependence on $\log\log(T)$.

	To prove inequality (\ref{eq:martingale1}), consider the martingale difference sequence
	\[X_t = \1\{\WL^i(\x_t) \neq y_t\}\1\{t \in U\} - \1\{\WL^i(\x_t) \neq y_t\} p_t^i.\]
	Note that $|X_t| \leq 1$, and the conditional variance satisfies
	\[\Var_t[X_t | X_1, X_2, \ldots, X_{t-1}] \leq p_t^i.\]
	Then, by Lemma~2 of \citet{BDHKRT}, for any $\delta < 1/e$ and assuming $T \geq 4$, with probability at least $1 - \log_2(T)\delta$, we have 
	\[\sum_{t=1}^T X_t \leq 2 \max\left\{2\sqrt{\sigma}, \sqrt{\ln(\deltainv)}\right\} \sqrt{\ln(\deltainv)} = \tilde{O}(\sqrt{\sigma}),\]
	by choosing $\delta \ll \frac{1}{\log_2(T)}$. This implies inequality (\ref{eq:martingale1}). Inequality (\ref{eq:martingale2}) is proved similarly. Note that these high probability bounds are conditioned on the internal randomness of $\WL^i$. By taking an expectation of this conditional probability over the internal randomness of $\WL^i$, we conclude that inequalities (\ref{eq:martingale1}) and (\ref{eq:martingale2}) hold with high probability unconditionally.

	Via a union bound, inequalities (\ref{eq:mistake-bound}), (\ref{eq:martingale1}) and (\ref{eq:martingale2}) all hold simultaneously with high probability, which implies that
	\begin{equation} \label{eq:martingale-combined}
		\sum_{t=1}^T\1\{\WL^i(\x_t) \neq y_t\} p_t^i \leq (\half - \gamma)\sigma + S + \tilde{O}\left(\sqrt{\sigma}\right).
	\end{equation}
	Using the facts that $p_t^i = \frac{w_t^i}{\|\w^i\|_\infty}$ and $\1\{\WL^i(\x_t) \neq y_t\} = \frac{1 - z_t^i}{2}$ and simplifying, we get
	\begin{align*}
		\w^i \cdot \z^i &\geq 2\gamma \|\w^i\|_1 - 2S\|\w^i\|_\infty - \tilde{O}(\sqrt{\|\w^i\|_1 \|\w^i\|_\infty})\\
		&\geq 2\gamma \|\w^i\|_1 - 2S\|\w^i\|_\infty - \gamma \|\w^i\|_1 - \tilde{O}(\tfrac{\|\w^i\|_\infty}{\gamma})\\
		&= \gamma \|\w^i\|_1 - 2S\|\w^i\|_\infty - \tilde{O}(\tfrac{\|\w^i\|_\infty}{\gamma}).
	\end{align*}
	The second inequality above follows from the arithmetic mean-geometric mean inequality. This gives us the desired bound. The high probability bound for all weak learners follows by taking a union bound.
\end{proof}

\section{Proof of Lemma \ref{lem:binom}}\label{app:lem:binom}
\begin{proof}
Let $X \sim B(m, p)$ be a binomial random variable 
where $m = N - i$ and $p = 1/2+\gamma/2$.
Also let $q = 1 - p$ and $F_X$ be the CDF of X. 
By the definition of $w_t^i$, we have $w_t^i \leq \frac{1}{2}\max_k \Pr\{X = k\} $.
We will approximate $X$ by a Gaussian random variable 
$G \sim N(mp, mpq)$ with density function $f$ and CDF $F_G$.
Note that 
\begin{align*}
|\Pr\{X=k\} - \int_{k-1}^{k} f(G) dG | =\;& |\left(F_X(k) - F_X(k-1)\right) - \left(F_G(k) - F_G(k-1)\right) | \\
\leq\;& | F_X(k) - F_G(k) | + | F_X(k-1) - F_G(k-1) |.
\end{align*}
So by applying Berry-Esseen theorem to the above two CDF differences between $X$ and $G$,
we arrive at 
$$ \left|\Pr\{X=k\} - \int_{k-1}^{k} f(G) dG\right| \leq \frac{2C(p^2+q^2)}{\sqrt{mpq}} ,$$
where $C$ is the universal constant stated in Berry-Esseen theorem.
It remains to point out that
\begin{align*}
\Pr\{X=k\} &\leq \int_{k-1}^{k} f(G) dG + \frac{2C(p^2+q^2)}{\sqrt{mpq}} \\
&\leq \max_{G\in R} f(G) + \frac{2C(p^2+q^2)}{\sqrt{mpq}} \\
&= \frac{1}{\sqrt{2\pi mpq}} + \frac{2C(p^2+q^2)}{\sqrt{mpq}} 
= O\left(\frac{1}{\sqrt{m}}\right),
\end{align*}
since $pq = 1/4-\gamma^2/4 \geq 3/16$.
\end{proof}

\end{document}